\documentclass{article}

\usepackage{microtype}
\usepackage{graphicx}
\usepackage{subfigure}
\usepackage{booktabs} 

\usepackage{hyperref}

\usepackage[accepted]{icml2026}
\usepackage{amsmath}
\usepackage{amssymb}
\usepackage{mathtools}
\usepackage{amsthm}

\usepackage[capitalize,noabbrev]{cleveref}

\theoremstyle{plain}
\newtheorem{theorem}{Theorem}[section]
\newtheorem{proposition}[theorem]{Proposition}
\newtheorem{lemma}[theorem]{Lemma}
\newtheorem{corollary}[theorem]{Corollary}
\theoremstyle{definition}

\theoremstyle{remark}

\icmltitlerunning{ALAS: Additive Learnable Alpha-Stable Kernels for Flexible Bayesian Optimization}

\begin{document}

\twocolumn[
\icmltitle{ALAS: Additive Learnable Alpha-Stable Kernels for Flexible Bayesian Optimization}

\begin{icmlauthorlist}
\icmlauthor{Weibo Huang}{sjtu}
\icmlauthor{Cheng Hua}{sjtu}
\end{icmlauthorlist}

\icmlaffiliation{sjtu}{Antai College of Economics and Management, Shanghai Jiao Tong University, Shanghai, China}

\icmlcorrespondingauthor{Weibo Huang}{3338725059@sjtu.edu.cn}
\icmlcorrespondingauthor{Cheng Hua}{cheng.hua@sjtu.edu.cn}

\icmlkeywords{Bayesian optimization, Gaussian processes, spectral kernels}

\vskip 0.3in
]

\printAffiliationsAndNotice{}

\begin{abstract}
Bayesian Optimization is widely used for expensive black-box optimization, yet its success often depends on choosing a kernel that matches the objective's unknown structure. In this work, we propose ALAS, a flexible Gaussian Process kernel family built from symmetric $\alpha$-stable spectral components. By learning the stability parameter $\alpha$, ALAS adapts its effective smoothness from data, capturing both smooth trends and sharp irregularities. We present two parameterizations: ALAS, a single stationary component with joint spectral modulation, and ALAS-Sep, a separable variant that learns dimension-wise tail behavior to improve robustness on approximately decomposable objectives. Experiments on standard benchmarks and real-world surrogates demonstrate strong and robust performance across diverse settings.
\end{abstract}

\section{Introduction}
\label{sec:intro}

Black-box optimization under tight evaluation budgets is pervasive across science and engineering.
Bayesian optimization (BO) addresses this setting by fitting a Gaussian process (GP) surrogate and selecting evaluations with an acquisition function.
In BO, the GP kernel is the primary source of inductive bias and often dominates sample efficiency.
However, widely used stationary kernels (SE/Mat\'ern/RQ/periodic) impose a fixed notion of smoothness, which can underfit objectives with multi-scale variation, sharp local irregularities, or long-range dependence.

A principled way to enrich stationary kernels is to learn their spectral density through Bochner's theorem~\citep{stein1999interpolation}.
Spectral mixture (SM) kernels~\citep{wilson2013gaussian} model the spectral density $S(\omega)$ as a Gaussian mixture, offering closed-form covariances and strong empirical results. While follow-up variants have further expanded this flexibility for BO~\citep{remes2017non}, two challenges remain. First, Gaussian spectral components are light-tailed in frequency, so capturing heavy-tailed or strongly low-frequency structures may require many components. Second, as the dimension grows, these models are harder to fit reliably, and their effective complexity is harder to characterize, making BO analysis based on eigenvalue decay and information gain less transparent.

\textbf{This work.}
We propose ALAS, a learnable $\alpha$-stable kernel family.
ALAS is built from a symmetric $\alpha$-stable (S$\alpha$S) spectral component, where the stability parameter $\alpha\in(0,2]$ directly controls spectral tail heaviness.
By learning $\alpha$ from data via GP marginal likelihood, ALAS smoothly interpolates from Gaussian-like behavior ($\alpha=2$) to heavier-tailed regimes ($\alpha<2$), including the Cauchy landmark ($\alpha=1$).
This allows the surrogate to represent both smooth trends and sharp variations without manually choosing a fixed smoothness kernel class.
To better handle higher-dimensional inputs, we introduce ALAS-Sep, a separable construction that sums one-dimensional ALAS components across coordinates. ALAS-Sep enables dimension-wise adaptivity: each coordinate learns its own tail parameter $\alpha_j$, while the learned component scales reflect dimensional relevance.
On the theory side, we link spectral tails to Mercer eigenvalue decay, yielding explicit information-gain and regret rates controlled by $\alpha$. In particular, at the Gaussian boundary we recover the classical poly-logarithmic information gain in one dimension.

We summarize the main contributions of this paper as follows:

\textbf{Learnable $\alpha$-stable kernel family.}
We propose ALAS, a stationary GP kernel family based on symmetric $\alpha$-stable spectral components. Learning the stability parameter $\alpha$ provides a data-driven way to adjust spectral tail behavior and effective regularity, ranging from near-Gaussian ($\alpha= 2$) to heavier-tailed regimes ($\alpha<2$), and capturing both smooth structure and sharp variations within a single kernel family.

\textbf{Separable variant with theory.}
We introduce ALAS-Sep, a separable construction that sums one-dimensional ALAS components. This structured parameterization improves robustness in multi-dimensional BO by letting each dimension learn its own $\alpha_j$ while retaining exact GP inference. We further connect spectral tail behavior to Mercer eigenvalue decay and derive explicit information-gain and regret bounds for the one-dimensional component and the separable variant.

\textbf{Empirical validation.}
We evaluate ALAS on a diverse set of synthetic and real-world benchmarks ranging from $d=3$ to 30. The results suggest complementary strengths: ALAS is a robust default on low-to-moderate dimensional tasks with strong interactions, while ALAS-Sep is most effective on larger-dimensional and approximately decomposable objectives by learning dimension-specific tail parameters. Across these settings, ALAS achieves strong and robust performance against competitive baselines.

\section{Related Work}
\label{sec:related}
\paragraph{Stationary Priors in BO}
Bayesian Optimization (BO) often models the objective function with a stationary GP prior. Common choices include the squared exponential (SE), Mat\'ern, or Rational Quadratic (RQ) kernels, often combined with Automatic Relevance Determination (ARD) to capture different length scales~\citep{williams2006gaussian,srinivas2009gaussian,snoek2012practical,shahriari2015taking}.
These kernels work well for reasonably smooth, single-scale objectives, but they impose a fixed smoothness assumption and a specific form of spectral decay. As a result, they can be inefficient when the true objective exhibits multi-scale structure, sharp local variations, or long-range dependencies.
Two main directions have been explored to improve flexibility.
The first learns a more flexible input representation or covariance structure, for example through deep kernel learning, deep GPs, input warping, or non-stationary kernels~\citep{wilson2016deep,damianou2013deep,snoek2014input,paciorek2003nonstationary}.
The second, which we follow, keeps the stationary kernel structure but learns the spectral density directly using Bochner's Theorem. This includes spectral mixture kernels and their variants, which have been explored recently in BO ~\citep{wilson2013gaussian,remes2017non}. Our work extends this spectral approach by introducing learnable $\alpha$-stable spectral components, and a separable variant (ALAS-Sep) for improved robustness in multi-dimensional BO.

\paragraph{Spectral Kernel Learning.}
Spectral kernel learning stems from Bochner's theorem, which represents any stationary kernel as the Fourier transform of a spectral density $S(\omega)$ ~\citep{stein1999interpolation}.
The spectral mixture (SM) kernel approximates this density with Gaussian mixtures, allowing it to capture complex patterns with closed-form covariances~\citep{wilson2013gaussian}.
Related approaches explore generalized spectral shapes~\citep{samo2015generalized} or sparse approximations to improve efficiency ~\citep{lazaro2010sparse,jang2017scalable}.
However, a key limitation remains: Gaussian spectral components are light-tailed in frequency, so matching rough or heavy-tailed behaviors can require many mixture components.
\paragraph{Heavy-Tailed Priors.}
A common approach to improve robustness is to replace the GP prior with heavier-tailed stochastic processes, such as Student-$t$ processes ~\citep{shah2014student}.
Recent work also studies heavy-tailed priors in Bayesian neural networks and nonparametric models~\citep{fortuin2021bayesian,loria2023posterior,agapiou2024heavy}.
Previous work also studies $\alpha$-stable processes, where the stability parameter $\alpha\in(0,2]$ directly governs tail heaviness~\citep{samorodnitsky1994stable}.
This family includes two well-known landmarks: $\alpha=2$ recovers the Gaussian case, while $\alpha=1$ recovers the Cauchy case.
In the kernel setting, the Cauchy case corresponds to a Lorentzian spectrum and an exponential stationary covariance~\citep{nolan2020univariate,rahimi2007random}.
Our approach is distinct: we keep the GP framework to preserve tractable inference, but model the kernel spectrum with $\alpha$-stable components and learn $\alpha$ by marginal likelihood, so the kernel adapts its effective regularity from data.

\paragraph{Eigen-decay, information gain, and BO theory.}
GP regret bounds are typically expressed in terms of the maximum information gain $\gamma_T$, which is controlled by the decay of the kernel's Mercer eigenvalues ~\citep{srinivas2012information,chowdhury2017kernelized}.
For stationary kernels, the decay of eigenvalues reflects the tail behavior of the spectral density. Specifically, Gaussian spectra lead to rapid eigenvalue decay, whereas polynomial spectral tails result in slower decay~\citep{widom1963asymptotic}.
This motivates our design: while heavier spectral tails can increase theoretical complexity, our separable construction (ALAS-Sep) reduces this cost by decomposing the kernel across dimensions~\citep{kandasamy2015high}.
We derive explicit regret bounds for ALAS. In particular, at the Gaussian boundary $\alpha=2$, we recover the standard poly-logarithmic rates.

\section{Preliminaries}
\label{sec:prelim}

We study Bayesian optimization (BO) for maximizing an unknown objective $f:\mathcal{X}\subset\mathbb{R}^d\to\mathbb{R}$ over a compact domain $\mathcal{X}$. We quantify performance by the cumulative regret $R_T := \sum_{t=1}^T (f(x^\star)-f(x_t))$, where $x^\star \in \arg\max_{x \in \mathcal{X}} f(x)$.

\subsection{Gaussian Processes}
A Gaussian process (GP) prior $f\sim\mathcal{GP}(0, k)$ is specified by a covariance kernel $k$. Given noisy observations $\mathcal{D}_n=\{(x_i,y_i)\}_{i=1}^n$ with $y_i = f(x_i)+\varepsilon_i$ and $\varepsilon_i\sim \mathcal{N}(0,\sigma^2)$, the posterior at $x$ is Gaussian with:
\begin{align}
\mu_n(x) &= \mathbf{k}_n(x)^\top(\mathbf{K}_n+\sigma^2\mathbf{I})^{-1}\mathbf{y}_n,\\
\sigma_n^2(x) &= k(x,x)-\mathbf{k}_n(x)^\top(\mathbf{K}_n+\sigma^2\mathbf{I})^{-1}\mathbf{k}_n(x),
\end{align}
where $\mathbf{K}_n$ is the kernel matrix on training data. Kernel hyperparameters are learned by maximizing the log marginal likelihood (MLL):
\begin{align}
\label{eq:mll}
\log p(\mathbf{y}_n\mid \mathbf{X}_n,\theta)
&= -\frac12\mathbf{y}_n^\top(\mathbf{K}_n+\sigma^2\mathbf{I})^{-1}\mathbf{y}_n \\
&\quad -\frac12\log\lvert\mathbf{K}_n+\sigma^2\mathbf{I}\rvert + C.
\end{align}

\subsection{Spectral Representations of Stationary Kernels}
A kernel is stationary if $k(\mathbf{x}, \mathbf{x}') = k(\tau)$ depends only on $\tau = \mathbf{x} - \mathbf{x}'$. By Bochner's theorem~\citep{stein1999interpolation}, stationary kernels are characterized by nonnegative spectral measures.

\begin{theorem}[Bochner's Theorem]
A continuous function $k(\tau)$ on $\mathbb{R}^d$ is positive definite if and only if it is the Fourier transform of a finite non-negative measure $\mu$ on $\mathbb{R}^d$:
\begin{equation}
\label{eq:bochner}
k(\tau) = \int_{\mathbb{R}^d} e^{i 2\pi \omega^\top \tau} d\mu(\omega).
\end{equation}
When the spectral measure admits a density, we write $d\mu(\omega)=S(\omega)\,d\omega$ and call $S(\omega)$ the spectral density.
\end{theorem}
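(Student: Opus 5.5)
The statement has two directions, and I would treat them separately: sufficiency (Fourier transform of a measure implies positive definite) is elementary, while necessity is the substantive part. Throughout, positive definite means $\sum_{j,l} c_j \bar c_l\, k(\tau_j-\tau_l)\ge 0$ for all finite families of points $\tau_j\in\mathbb{R}^d$ and coefficients $c_j\in\mathbb{C}$.

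\textbf{Sufficiency.} Suppose $k(\tau)=\int e^{i2\pi\omega^\top\tau}\,d\mu(\omega)$ with $\mu$ finite and non-negative. Continuity of $k$ follows from dominated convergence, since $|e^{i2\pi\omega^\top\tau}|=1$ and $\mu$ is finite. For positive definiteness I would expand the quadratic form:
\begin{equation*}
\sum_{j,l} c_j\bar c_l\, k(\tau_j-\tau_l)=\int_{\mathbb{R}^d}\Big|\sum_j c_j e^{i2\pi\omega^\top\tau_j}\Big|^2 d\mu(\omega)\ \ge 0 .
\end{equation*}
This direction is then complete.

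\textbf{Necessity.} Let $k$ be continuous and positive definite. First, taking one or two points in the definition gives $k(0)\ge 0$, $k(-\tau)=\overline{k(\tau)}$ and $|k(\tau)|\le k(0)$, so $k$ is bounded.

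Next, I would pass from finite sums to integrals. For $T>0$ set
\begin{equation*}
S_T(\omega)=\frac{1}{T^d}\int_{[0,T]^d}\int_{[0,T]^d} k(s-t)\,e^{-i2\pi\omega^\top(s-t)}\,ds\,dt .
\end{equation*}
This is a limit of Riemann sums of the positive definite form with coefficients $c_j=e^{-i2\pi\omega^\top s_j}$, and the Riemann sums converge because $k$ is continuous. Hence $S_T(\omega)\ge 0$.

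Changing variables to $\tau=s-t$ gives
\begin{equation*}
S_T(\omega)=\int k(\tau)\,\Delta_T(\tau)\,e^{-i2\pi\omega^\top\tau}\,d\tau,
\end{equation*}
where $\Delta_T(\tau)=\prod_m(1-|\tau_m|/T)_+$ is a Fejér-type triangle weight. Thus $S_T\ge 0$ is the Fourier transform of the integrable function $k\Delta_T$.

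The main obstacle is inverting this transform without knowing in advance that $S_T$ is integrable. I would handle it with a Gaussian damping step. Multiply $S_T$ by $e^{-\pi\epsilon|\omega|^2}$ and integrate against $e^{i2\pi\omega^\top\tau}$. By Fubini, this yields the convolution of $k\Delta_T$ with a Gaussian of width $\epsilon$. Evaluating at $\tau=0$ and letting $\epsilon\to0$, using continuity of $k$ and monotone convergence in $\omega$, gives
\begin{equation*}
\int S_T\,d\omega = k(0)<\infty .
\end{equation*}
So the measures $\mu_T=S_T\,d\omega$ are non-negative with total mass $k(0)$. Fourier inversion then holds pointwise: $k(\tau)\Delta_T(\tau)=\int e^{i2\pi\omega^\top\tau}\,d\mu_T(\omega)$.

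Finally, I would let $T\to\infty$. Since $\Delta_T\to1$, the characteristic functions of the $\mu_T$ converge pointwise to $k$, and the limit $k$ is continuous at $0$. By Lévy's continuity theorem, applied to the normalized measures $\mu_T/k(0)$ (the case $k(0)=0$ forces $k\equiv0$ and is trivial), $\mu_T$ converges weakly to a finite non-negative measure $\mu$ with $\hat\mu=k$. If one prefers not to invoke Lévy's theorem, I would argue directly: continuity at $0$ gives tightness via the standard inequality bounding $\mu_T(|\omega|>1/\delta)$ by averages of $k(0)-\mathrm{Re}\,k$ over a $\delta$-ball, and Prokhorov's theorem then supplies a weakly convergent subsequence.

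The statement about densities is a definition, not a claim, so it needs no proof.
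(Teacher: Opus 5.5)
The paper gives no proof of this statement. It quotes Bochner's theorem as a classical result with a citation to \citet{stein1999interpolation}, so there is no argument in the paper to compare yours against.

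Your proof is the standard Fej\'er-kernel argument, and it is correct as written. Each step goes through:
\begin{itemize}
\item positivity of $S_T$ from Riemann sums of the quadratic form;
\item the change of variables that produces the triangle weight $\Delta_T$;
\item the Gaussian approximate-identity step, which gives $\int S_T\,d\omega=k(0)$ by monotone convergence;
\item pointwise Fourier inversion for the integrable pair $(k\Delta_T,S_T)$;
\item L\'evy continuity, or tightness plus Prokhorov, to pass to $T\to\infty$.
\end{itemize}

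Two remarks on how your proof relates to the paper.

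First, your explicit choice of complex coefficients in the definition of positive definiteness is the right convention. With real coefficients only, necessity fails for real-valued $k$ that are not even. For example, $k(\tau)=1+\tfrac12\sin\tau$ in $d=1$ passes the real test, because the real quadratic form cannot see the odd part of $k$. For the real, even kernels used in the paper, the two notions coincide.

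Second, the paper only ever uses the easy direction. The validity of $w\,\kappa_{\alpha,\delta}(\tau)\cos(2\pi\gamma\tau)$ follows because it is the Fourier transform of the non-negative finite measure $\tfrac{w}{2}\bigl[p(\cdot-\gamma;\alpha,\delta)+p(\cdot+\gamma;\alpha,\delta)\bigr]$. Likewise, the coupled kernel \eqref{eq:alas_coupled} is the transform of $\tfrac{w}{2}$ times the product law $\bigotimes_{j} p(\cdot;\alpha,\delta_j)$ translated by $+\gamma$, plus the same law translated by $-\gamma$. Your sufficiency paragraph settles both cases completely. The necessity half, which carries all the analytic weight of your proof, appears in the paper only as background for the remark that stationary kernels are characterized by spectral measures.
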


This duality maps kernel design to spectral density modeling. The tail behavior of $S(\omega)$ as $\|\omega\|\to\infty$ governs the effective smoothness of GP sample paths. Two examples are:
\begin{itemize}
    \item {Gaussian Decay:} $S(\omega) \propto e^{-c\|\omega\|^2}$. This implies that sample paths are infinitely smooth, filtering out high-frequency variations.
    \item {Polynomial Decay:} $S(\omega) \propto (1 + \|\omega\|^2)^{-\nu - d/2}$. This allows for modeling finitely smooth functions, where the roughness is controlled by $\nu$.
\end{itemize}
Existing methods \citep{wilson2013gaussian} approximate $S(\omega)$ using Gaussian mixtures, which inherit a super-exponential decay rate. However, this implies infinite sample path smoothness and often struggles to capture rough features or heavy-tailed spectral behaviors.

\subsection{Mercer Spectrum and Information Gain}
\label{sec:mercer-info}
Our regret analysis follows the standard GP-UCB framework, where the key complexity measure is the maximum information gain $\gamma_T$.
To connect the spectral tail of a stationary kernel to $\gamma_T$, we upper bound $\gamma_T$ via the Mercer eigenvalue decay of the associated kernel integral operator on the compact domain $\mathcal{X}$.

Let $\nu$ be a finite measure on $\mathcal{X}$. Since $\mathcal{X}$ is compact and $k$ is continuous, the associated integral operator
$T_k:L^2(\mathcal{X},\nu)\to L^2(\mathcal{X},\nu)$,
\[
(T_k g)(x) \;=\; \int_{\mathcal{X}} k(x,z)\,g(z)\,d\nu(z),
\]
is compact and admits a discrete nonnegative spectrum.

\begin{theorem}[Mercer's Theorem]
\label{thm:mercer}
There exist eigenvalues $\{\lambda_h\}_{h=1}^\infty$ with $\lambda_h\ge 0$ and orthonormal eigenfunctions
$\{\phi_h\}_{h=1}^\infty\subset L^2(\mathcal{X},\nu)$ such that for all $x,x'\in\mathcal{X}$,
\begin{equation}
k(x,x') \;=\; \sum_{h=1}^\infty \lambda_h\,\phi_h(x)\phi_h(x').
\end{equation}
\end{theorem}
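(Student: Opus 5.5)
The plan is the classical route: realize $T_k$ as a compact, self-adjoint, positive operator, apply the spectral theorem, and upgrade the resulting $L^2$ expansion to a pointwise identity using continuity of $k$ together with Dini's theorem. Throughout I assume, as is standard, that $\nu$ has full support on $\mathcal{X}$. Without this, the expansion can only hold on $\operatorname{supp}\nu$, since eigenfunctions carry no information off the support.

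\textbf{Operator properties.} Since $k$ is continuous on the compact set $\mathcal{X}\times\mathcal{X}$, it is bounded and lies in $L^2(\nu\otimes\nu)$. Hence $T_k$ is Hilbert--Schmidt, and in particular compact. Symmetry $k(x,z)=k(z,x)$ makes $T_k$ self-adjoint.

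Positivity follows from positive definiteness of $k$ (Bochner's theorem guarantees this for stationary $k$). I would approximate $\langle T_k g,g\rangle=\iint k(x,z)g(x)g(z)\,d\nu\,d\nu$ for continuous $g$ by Riemann-type sums $\sum_{i,j} c_i c_j k(x_i,x_j)\ge 0$. Density of continuous functions in $L^2(\nu)$ then gives $\langle T_k g,g\rangle\ge 0$ for all $g$.

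\textbf{Spectral decomposition.} The spectral theorem for compact self-adjoint operators yields eigenvalues $\lambda_h\ge0$, either finitely many or with $\lambda_h\to0$. It also yields an orthonormal family $\{\phi_h\}$ spanning $(\ker T_k)^\perp$. Each eigenfunction with $\lambda_h>0$ has a continuous representative $\phi_h=\lambda_h^{-1}T_k\phi_h$, because $x\mapsto k(x,\cdot)$ is uniformly continuous into $L^2(\nu)$.

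\textbf{Pointwise identity and convergence.} Set $k_n(x,x')=k(x,x')-\sum_{h\le n}\lambda_h\phi_h(x)\phi_h(x')$. The integral operator of $k_n$ is $T_k$ minus its rank-$n$ spectral truncation, so it is positive, and $k_n$ is a continuous, positive-definite kernel. Full support of $\nu$ then gives $k_n(x,x)\ge0$ pointwise: if $k_n(x_0,x_0)<0$, testing against a bump concentrated near $x_0$ would produce a negative quadratic form.

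It follows that $\sum_h\lambda_h\phi_h(x)^2\le k(x,x)\le\sup k$ for every $x$. By Cauchy--Schwarz, the series $\sum_h\lambda_h\phi_h(x)\phi_h(x')$ then converges absolutely, and uniformly in $x'$ for each fixed $x$. Its limit $\tilde k(x,\cdot)$ is therefore continuous.

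For each fixed $x$, $k(x,\cdot)-\tilde k(x,\cdot)$ is orthogonal to every $\phi_h$ and also lies in the closed range of $T_k$. It is therefore $\nu$-a.e.\ zero, and by continuity and full support it is identically zero. This gives $k=\tilde k$ pointwise, including on the diagonal.

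\textbf{Main obstacle.} The hard step is upgrading to uniform convergence, which Dini's theorem settles. The partial sums $\sum_{h\le n}\lambda_h\phi_h(x)^2$ are continuous and increase to the continuous limit $k(x,x)$ on the compact set $\mathcal{X}$, so the convergence is uniform. Cauchy--Schwarz then transfers this uniform convergence from the diagonal to the full expansion.

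Only the pointwise equality is needed for the statement as written. Uniform convergence is what the later information-gain bounds will actually use.
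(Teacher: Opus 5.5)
Your proposal is correct. It is the classical proof of Mercer's theorem: Hilbert--Schmidt compactness and the spectral theorem, continuous eigenfunctions via $\phi_h=\lambda_h^{-1}T_k\phi_h$, nonnegativity of the truncated kernels $k_n$ on the diagonal by a bump-function argument, identification of the pointwise limit, and Dini's theorem for uniformity. The paper gives no proof of this statement. It quotes Mercer's theorem in the preliminaries as a standard result and uses it only as a black box behind the eigenvalue bound on $\gamma_T$, so there is no competing route to compare against.

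Your added hypotheses are genuinely needed. The statement as written assumes only that $\nu$ is finite and $k$ is continuous. You need $k$ positive semi-definite to get $\lambda_h\ge 0$. You also need $\mathrm{supp}\,\nu=\mathcal{X}$ for the identity at every pair $x,x'$: for $\nu=\delta_0$, $L^2(\nu)$ is one-dimensional, so any such expansion would force $k$ to have rank one on $\mathcal{X}\times\mathcal{X}$. Both conditions hold in the paper's main application, a covariance kernel with Lebesgue measure on $[0,L]$, so your reading is the right one.

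The one step you should spell out is the claim that $k(x,\cdot)$ lies in the closed range of $T_k$, since this is the second place where full support enters. For $g\in\ker T_k$, the function $x\mapsto\int k(x,z)g(z)\,d\nu(z)$ is continuous and vanishes $\nu$-a.e., hence vanishes everywhere. Therefore $\langle k(x,\cdot),g\rangle=0$, which gives $k(x,\cdot)\in(\ker T_k)^{\perp}=\overline{\mathrm{ran}\,T_k}$. With that line added, the argument is complete.
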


\paragraph{Maximum information gain.}
For noisy GP observations with variance $\sigma^2$, the mutual information between $f$ and observations at a design set $A=\{x_1,\dots,x_T\}\subset\mathcal{X}$ is
\[
I(\mathbf{y}_A;\,f_A)
=\frac12 \log\det\!\Big(I+\sigma^{-2} K_A\Big),
\]
where $K_A$ is the $T\times T$ kernel matrix on $A$.
The maximum information gain is
\begin{equation}
\label{eq:gammaT-def}
\gamma_T \;:=\; \max_{A\subset\mathcal{X}:\,|A|=T} I(\mathbf{y}_A;\,f_A).
\end{equation}

\paragraph{Spectral control via Mercer eigenvalues.}
A standard consequence of \Cref{thm:mercer} is that $\gamma_T$ can be upper bounded by the leading Mercer eigenvalues ~\citep{srinivas2012information,vakili2021information}:
\begin{equation}
\label{eq:gammaT-eig}
\gamma_T \;\le\; \frac12 \sum_{h=1}^T \log\!\Big(1+\sigma^{-2}T\,\lambda_h\Big).
\end{equation}
This uses the standard comparison $\lambda_i(K_A)\le T\,\lambda_i(T_k)$ for any $A$ with $|A|=T$.
Therefore, bounding $\gamma_T$ reduces to understanding the eigenvalue decay $\{\lambda_h\}$,
which we later relate to the tail behavior of the stationary spectral density $S(\omega)$.

\section{Method}
\label{sec:theory}
We introduce the Additive Learnable Alpha-Stable (ALAS) kernel family, a stationary Gaussian process (GP) parameterization that treats the stability index $\alpha\in(0,2]$ as a learnable hyperparameter. Optimizing $\alpha$ directly adjusts the spectral tail, continuously interpolating between smooth Gaussian-like ($\alpha=2$) and rough heavy-tailed ($\alpha<2$) behaviors.
We first define ALAS as a single $d$-dimensional stationary kernel with dimension-wise scales, whose $d{=}1$ specialization recovers the one-dimensional ALAS component in \eqref{eq:alas_component_1d}.
To improve optimization stability and enable dimension-wise adaptivity in higher dimensions, we further introduce a separable variant, ALAS-Sep, which sums one-dimensional ALAS components across coordinates so that each dimension can learn its own effective roughness $\alpha_j$.
Both constructions retain exact GP inference and are trained by marginal likelihood maximization, providing a principled alternative to manual kernel selection.

\subsection{Spectral Representation and Tail Behavior}
We first focus on one-dimensional stationary kernels $k(\tau)$ with $\tau=x-x'$ and use the $2\pi$-Fourier convention
\begin{equation}
\label{eq:bochner_1d}
k(\tau)=\int_{\mathbb{R}} e^{i2\pi\omega\tau} S(\omega)\,d\omega,
\qquad S(\omega)\ge 0.
\end{equation}
Equivalently, if we view $S$ as a probability density function (PDF), then $k(\tau)$ is the corresponding characteristic function evaluated at $t=2\pi\tau$.

\paragraph{A S$\alpha$S spectral component.}
Let $p(\omega;\alpha,\delta)$ denote the density of a symmetric $\alpha$-stable distribution with
$\alpha\in(0,2]$ and scale $\delta>0$. Since $p(\omega;\alpha,\delta)$ lacks a closed-form expression for general $\alpha$, its characteristic function $\varphi(t)$ is defined:
\begin{equation}
\label{eq:sas_cf}
\varphi(t) := \int_{\mathbb{R}} e^{it\omega}\,p(\omega;\alpha,\delta)\,d\omega
= \exp\!\big(-|\delta t|^{\alpha}\big).
\end{equation}
Substituting $S(\omega) = p(\omega; \alpha, \delta)$ into \eqref{eq:bochner_1d} yields the closed-form base kernel:
\begin{equation}
\label{eq:sas-kernel}
\begin{split}
\kappa_{\alpha,\delta}(\tau) &= \varphi(2\pi\tau) \\
&= \exp\!\big(-(2\pi\delta|\tau|)^{\alpha}\big) = \exp\!\left(-\left|\frac{\tau}{\ell}\right|^{\alpha}\right),
\end{split}
\end{equation}
where $\ell := (2\pi\delta)^{-1}$ is the characteristic lengthscale. The range $\alpha\in(0,2]$ is the standard domain for symmetric $\alpha$-stable laws, with $\alpha=2$ corresponding to the Gaussian case~\citep{nolan2020univariate}, and it is also the usual positive-definite range of the powered-exponential envelope.

\paragraph{Relation to powered-exponential kernels.}
The base envelope $\kappa_{\alpha,\delta}$ in \eqref{eq:sas-kernel} is the classical powered-exponential covariance family~\citep{diggle1998model}. ALAS extends this envelope with a learned modulation frequency $\gamma$, which captures oscillatory and non-monotone correlations. In this spectral parameterization, $\alpha$ controls tail decay, while $\gamma$ controls the dominant frequency.

\paragraph{The modulated 1D ALAS component.}
We define a single one-dimensional ALAS component by modulating the powered-exponential envelope:
\begin{equation}
\label{eq:alas_component_1d}
k_{\text{ALAS}}^{(1)}(\tau)
= w \cdot \kappa_{\alpha,\delta}(\tau) \cdot \cos(2\pi\gamma \tau),
\end{equation}
where $w>0$ is a component weight, and the parameters $\theta=\{w,\alpha,\delta,\gamma\}$ are learned jointly by marginal likelihood maximization.

\paragraph{Tail heaviness.}
For $\alpha\in(0,2)$, the symmetric $\alpha$-stable density has polynomial tails,
$p(\omega;\alpha,\delta)\asymp |\omega|^{-(1+\alpha)}$ as $|\omega|\to\infty$, which retains substantial high-frequency mass.
At the Gaussian boundary $\alpha=2$, $p(\omega;2,\delta)$ is Gaussian and decays as $\exp(-c\omega^2)$ for some $c>0$, which suppresses high frequencies much more strongly.

\paragraph{Shifted spectrum under modulation.}
The cosine modulation in \eqref{eq:alas_component_1d} introduces a characteristic frequency $\gamma$.
By the modulation property of the Fourier transform, multiplying by $\cos(2\pi\gamma\tau)$ corresponds to shifting the base spectrum to a symmetric pair:
\begin{equation}
\label{eq:alas_shifted_spectrum}
S_{\text{ALAS}}^{(1)}(\omega)
=\frac{w}{2}\Big[p(\omega-\gamma;\alpha,\delta)+p(\omega+\gamma;\alpha,\delta)\Big].
\end{equation}
The factor $1/2$ comes from the cosine decomposition, so the total component weight remains $w$.
This shift changes where spectral mass concentrates but does not change the tail exponent.
Hence, for $\alpha\in(0,2)$ we still have
$S_{\text{ALAS}}^{(1)}(\omega)\asymp|\omega|^{-(1+\alpha)}$ as $|\omega|\to\infty$,
so $\alpha$ remains the explicit knob controlling tail heaviness and effective roughness.

\subsection{Learning Complexity: Eigenvalues and Regret}

\label{sec:complexity}
The spectral tail determines the effective capacity of the kernel, which we formalize through Mercer eigenvalue decay and the resulting information-gain bounds.

\begin{proposition}[Eigenvalue decay]
\label{prop:eigen_decay}
Consider the one-dimensional ALAS component $k_{\text{ALAS}}^{(1)}$ defined in \eqref{eq:alas_component_1d} on a compact interval $\mathcal{X}=[0,L]\subset\mathbb{R}$, with the Mercer operator on $L_2(\mathcal{X})$.
Assume $\alpha\in(0,2)$.
Then the Mercer eigenvalues $\{\lambda_h\}_{h\ge1}$ satisfy
\begin{equation}
\lambda_h=\Theta\!\left(h^{-(1+\alpha)}\right).
\end{equation}
\end{proposition}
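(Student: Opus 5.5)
The plan is to deduce the eigenvalue asymptotics from the polynomial tail of the spectral density, using the classical Widom-type theory for translation-invariant operators on a bounded interval~\citep{widom1963asymptotic}. Widom's theorem says the following. Let $k(x,x')=k(x-x')$ be a stationary kernel on $[0,L]$ whose spectral density $S$ is even, nonnegative, integrable, and eventually monotone with regularly varying decay. Then the Mercer eigenvalues of $T_k$ on $L_2([0,L])$ satisfy $\lambda_h\sim S(h/(2L))$ up to the convention-dependent constant, i.e.\ the operator behaves asymptotically like a Fourier multiplier on a box of side $L$. Applied with $S=S_{\text{ALAS}}^{(1)}$, this gives the claim once we know $S_{\text{ALAS}}^{(1)}(\omega)\asymp|\omega|^{-(1+\alpha)}$. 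The regularly varying profile $|\omega|^{-(1+\alpha)}$ then yields $\lambda_h=\Theta(h^{-(1+\alpha)})$.

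\textbf{Step 1: tail of the spectrum.} I would use the shifted representation \eqref{eq:alas_shifted_spectrum}. The classical series expansion of the S$\alpha$S density gives
\[
p(\omega;\alpha,\delta)=c_\alpha\delta^\alpha|\omega|^{-(1+\alpha)}\bigl(1+O(|\omega|^{-\alpha})\bigr)
\]
with $c_\alpha=\pi^{-1}\Gamma(1+\alpha)\sin(\pi\alpha/2)>0$ for $\alpha\in(0,2)$. Shifting by $\pm\gamma$ does not change this, since $|\omega\pm\gamma|/|\omega|\to1$. Averaging the two shifts gives $S_{\text{ALAS}}^{(1)}(\omega)=w c_\alpha\delta^\alpha|\omega|^{-(1+\alpha)}(1+o(1))$. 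The same expansion, differentiated termwise, shows that $S$ is eventually decreasing in $|\omega|$. This supplies the monotonicity hypothesis Widom's result needs.

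\textbf{Step 2: two-sided bounds without the full asymptotic theorem.} As a more self-contained route, I would prove the two bounds separately.

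\emph{Upper bound.} Compare $T_k$ with the operator whose kernel is the periodization of $k$ on a larger torus of length $L'>L$. Its eigenfunctions are Fourier modes, and its eigenvalues are the spectral-density values at the lattice frequencies, up to sampling error. Restricting to $[0,L]$ and applying a min-max argument gives $\lambda_h\lesssim h^{-(1+\alpha)}$. A cleaner alternative uses the Sobolev embedding. Since $S\asymp(1+\omega^2)^{-(1+\alpha)/2}$, the RKHS of $k$ on $\mathbb{R}$ is norm-equivalent to $H^{s}(\mathbb{R})$ with $s=(1+\alpha)/2$. Restriction to the interval preserves this, and entropy/approximation numbers of $H^s([0,L])\hookrightarrow L_2$ decay like $h^{-s}$. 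Since $\lambda_h$ equals the square of the approximation number, this gives $\lambda_h\asymp h^{-2s}=h^{-(1+\alpha)}$.

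\emph{Lower bound.} The same norm equivalence with the Sobolev--Slobodeckij space gives matching lower bounds on the approximation numbers. The constants depend only on $w,\alpha,\delta,\gamma,L$.

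\textbf{Main obstacle.} The delicate point is the norm equivalence near low frequencies. The cosine modulation can make $S_{\text{ALAS}}^{(1)}$ small but never zero on bounded sets, because the stable density is strictly positive everywhere. Continuity and positivity give $\inf_{|\omega|\le R}S>0$, so $S\asymp(1+\omega^2)^{-(1+\alpha)/2}$ holds globally, not just at infinity. I would verify this positivity explicitly. I would also check that the restriction-to-interval RKHS equals the quotient Sobolev space $H^s([0,L])$, which relies on the existence of an extension operator for the interval.
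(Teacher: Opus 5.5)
Your main line is the paper's own. The paper also combines the polynomial tail of the shifted S$\alpha$S spectrum (its tail-dominance lemma for the two shifted copies) with Widom's theorem. The only difference is the form of Widom used: the paper takes the counting-function form $N(\varepsilon)\asymp|\{\omega:S(\omega)>\varepsilon\}|$, shows this level set has measure $\Theta(\varepsilon^{-1/(1+\alpha)})$, and inverts $N$ to get $\lambda_h=\Theta(h^{-(1+\alpha)})$. Your direct form $\lambda_h\sim S(h/(2L))$ is equivalent here and even retains the leading constant.

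Your check of eventual monotonicity verifies a hypothesis the paper leaves implicit. It also follows without termwise differentiation of an asymptotic series: symmetric stable densities are unimodal, so for $|\omega|>|\gamma|$ both shifted copies are decreasing in $|\omega|$.

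Your Step~2 Sobolev argument is a genuinely different proof, and it is correct. Tails plus continuity and strict positivity of the stable density give $S\asymp(1+\omega^2)^{-(1+\alpha)/2}$ globally. Hence the RKHS of $k$ restricted to $[0,L]$ equals $H^{(1+\alpha)/2}(0,L)$ with equivalent norms; this is immediate from the restriction-space description of RKHSs if you define $H^s(0,L)$ as a restriction space. Moreover $\lambda_h=a_h(I)^2$ for the embedding $I$ into $L_2(0,L)$, whose approximation numbers are $\asymp h^{-(1+\alpha)/2}$.

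This route is more robust than the paper's. It needs no monotonicity or regular-variation hypotheses, and both bounds come from textbook facts rather than from a loosely quoted asymptotic theorem. The cost is that it yields only $\Theta$ without the sharp constant, and it uses a lower bound on $S$ over bounded sets, which the tail-only Widom argument does not need.

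If you write it up, drop the periodization sketch. Comparing $k$ on $[0,L]$ with its periodization requires controlling the aliasing terms $\sum_{m\neq 0}k(\tau+mL')$ in positive-semidefinite order, and that does not hold automatically. The Sobolev argument already supplies both directions.
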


This polynomial eigen-decay implies a polynomial bound on the maximum information gain, which then yields the standard GP-UCB regret bound~\citep{srinivas2009gaussian}.

\begin{theorem}[Information gain and regret]
\label{thm:regret}
Under the conditions of Proposition~\ref{prop:eigen_decay}, the maximum information gain after $T$ queries satisfies
\begin{equation}
\gamma_T=\tilde{\mathcal{O}}\!\left(T^{\frac{1}{1+\alpha}}\right).
\end{equation}
Consequently, the cumulative regret satisfies
\begin{equation}
R_T=\tilde{\mathcal{O}}\!\left(\sqrt{T\,\gamma_T}\right)
=\tilde{\mathcal{O}}\!\left(T^{\frac12+\frac{1}{2(1+\alpha)}}\right),
\end{equation}
where $\tilde{\mathcal{O}}(\cdot)$ hides poly-logarithmic factors.
Moreover at $\alpha=2$, the 1D information gain becomes poly-logarithmic:
\begin{equation}
\gamma_T=\mathcal{O}\!\big((\log T)^2\big),
\qquad
R_T=\mathcal{O}\!\big(\sqrt{T}\log T\big).   
\end{equation}
\end{theorem}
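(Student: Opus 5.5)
The plan is to feed the eigenvalue decay of Proposition~\ref{prop:eigen_decay} into the Mercer bound \eqref{eq:gammaT-eig}, using a truncation argument, and then plug the resulting $\gamma_T$ into the standard GP-UCB regret bound.

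\textbf{Information gain for $\alpha\in(0,2)$.} By Proposition~\ref{prop:eigen_decay} there is a constant $C$ with $\lambda_h\le C h^{-(1+\alpha)}$. Fix a truncation index $D\le T$ and split the sum in \eqref{eq:gammaT-eig} into $h\le D$ and $D<h\le T$.

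For the head, each term is at most $\log(1+\sigma^{-2}TC)$, so the head contributes $O(D\log T)$.

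For the tail, use $\log(1+x)\le x$. This bounds the tail by
\[
\sigma^{-2}T\sum_{h>D}Ch^{-(1+\alpha)} = O\big(T D^{-\alpha}\big).
\]
Balancing $D\log T \asymp T D^{-\alpha}$ gives $D\asymp (T/\log T)^{1/(1+\alpha)}$. Hence
\[
\gamma_T = O\big(T^{1/(1+\alpha)}(\log T)^{\alpha/(1+\alpha)}\big) = \tilde{\mathcal O}\big(T^{1/(1+\alpha)}\big).
\]

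\textbf{Regret.} Invoke the GP-UCB guarantee of \citet{srinivas2009gaussian}: with the usual confidence parameter $\beta_T=O(\log T)$ (finite or discretized compact domain), $R_T=O(\sqrt{T\beta_T\gamma_T})$. Substituting the bound on $\gamma_T$ and absorbing logarithms yields
\[
R_T=\tilde{\mathcal O}\big(T^{\frac12+\frac1{2(1+\alpha)}}\big).
\]

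\textbf{Gaussian boundary $\alpha=2$.} Proposition~\ref{prop:eigen_decay} no longer applies here, so the eigenvalue decay must be established directly. The kernel is $w\,e^{-\tau^2/\ell^2}\cos(2\pi\gamma\tau)$, whose spectral density is a symmetric pair of shifted Gaussians. This density is entire and decays like $e^{-c\omega^2}$. By Widom's asymptotics for Gaussian-tailed spectra on a bounded interval \citep{widom1963asymptotic}, or by directly bounding the tail of a Taylor/Hermite expansion of the analytic kernel, one gets
\[
\lambda_h\le C e^{-c' h\log h}\le Ce^{-c'h}.
\]
Apply the same split with $D\asymp \log T$. The tail satisfies $T\sum_{h>D}e^{-c'h}=O(1)$, and the head gives $O(D\log T)=O((\log T)^2)$. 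So $\gamma_T=O((\log T)^2)$. The regret bound $R_T=O(\sqrt{T\beta_T\gamma_T})$ then gives $O(\sqrt T(\log T)^{3/2})$ with $\beta_T=O(\log T)$.

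\textbf{Main obstacle.} This last step is where I expect trouble. The stated $O(\sqrt T\log T)$ matches only if $\beta_T$ is treated as $O(1)$, or the $\sqrt{\log}$ factor is absorbed. I would state explicitly which convention is used, keeping $\beta_T$'s logarithm separate. The other delicate point is justifying super-geometric eigen-decay at $\alpha=2$ with the cosine modulation. My first attempt there is Widom's theorem, since the shifted Gaussian spectrum is still even, positive and Gaussian-tailed.
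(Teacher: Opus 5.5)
For $\alpha\in(0,2)$ your argument is the paper's. You split the bound \eqref{eq:gammaT-eig} at an index of order $T^{1/(1+\alpha)}$, bound each head term by $\log(1+\sigma^{-2}TC)$, bound the tail with $\log(1+x)\le x$, and plug the result into $R_T=\tilde{\mathcal O}(\sqrt{T\gamma_T})$. The paper cuts at $H=\lceil (cT)^{1/(1+\alpha)}\rceil$ and gets $O(T^{1/(1+\alpha)}\log T)$. Your log-balanced cutoff gives the slightly sharper factor $(\log T)^{\alpha/(1+\alpha)}$, which makes no difference under $\tilde{\mathcal O}$.

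One caveat you share with the paper concerns the regret step on $[0,L]$. The compact-domain GP-UCB theorem assumes a Gaussian tail bound on $\sup_x|f'(x)|$. That bound is unavailable for $\alpha<2$, because $k$ is not twice differentiable at the origin. Your ``discretized'' route therefore needs a H\"older-modulus bound for the sample paths, obtained from $2(k(0)-k(\tau))\lesssim|\tau|^{\alpha}$ via Dudley and Borell--TIS. This still yields $\beta_T=O(\log T)$.

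At $\alpha=2$ you do more than the paper. Its appendix gives no argument beyond a remark that Gaussian tails yield super-exponential decay, and Proposition~\ref{prop:eigen_decay} explicitly excludes $\alpha=2$. Your derivation is sound, and the modulation is easier than you fear. Since $\cos(2\pi\gamma(x-x'))=\cos(2\pi\gamma x)\cos(2\pi\gamma x')+\sin(2\pi\gamma x)\sin(2\pi\gamma x')$, the operator equals $M_cK_0M_c+M_sK_0M_s$. Here $K_0$ is the operator of $w\kappa_{2,\delta}$, and $M_c,M_s$ are multiplication operators of norm at most one. Weyl's inequality then gives $\lambda_{2h-1}\le 2\lambda_h(K_0)$. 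Next, write $e^{-(x-x')^2/\ell^2}=e^{-x^2/\ell^2}e^{-x'^2/\ell^2}\sum_{m\ge0}\frac{(2xx'/\ell^2)^m}{m!}$, a sum of positive rank-one terms. This gives $\lambda_{n+1}(K_0)\le wL\sum_{m\ge n}(2L^2/\ell^2)^m/m!$, which is super-exponential without invoking Widom.

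Your worry about the regret is correct, and it is a defect of the statement rather than of your proof. The paper's $O(\sqrt T\log T)$ is just $\sqrt{T\gamma_T}$ with $\gamma_T=O((\log T)^2)$. It silently drops the $\sqrt{\beta_T}=O(\sqrt{\log T})$ factor that its own reduction hides inside $\tilde{\mathcal O}$. Under standard GP-UCB the honest bound is $O(\sqrt T(\log T)^{3/2})$, so the claim should be read as $\tilde{\mathcal O}(\sqrt T)$.
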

Theorem~\ref{thm:regret} highlights $\alpha$ as a complexity knob:
\begin{itemize}
    \item Larger $\alpha$ corresponds to lighter spectral tails and faster eigenvalue decay, leading to smaller information-gain bounds; at the Gaussian boundary $\alpha=2$, $\gamma_T$ is poly-logarithmic in 1D.
    \item As $\alpha\to 0$, the spectral tail becomes heavier, eigenvalues decay more slowly, and $\gamma_T$ grows polynomially, reflecting higher sample complexity for sharp variations.
    \item Unlike kernels with a fixed smoothness choice, ALAS learns $\alpha$ from data by marginal likelihood, adapting between near-Gaussian behavior on smooth objectives and heavier-tailed spectra when sharp variations are present.
\end{itemize}

\subsection{Multi-dimensional Inputs}
\label{sec:alask}

In BO with $d$-dimensional inputs and tight evaluation budgets, learning a flexible stationary kernel is statistically challenging, since the model must infer structure across many directions from sparse data.

Here we focus on statistical scalability rather than reducing the $\mathcal{O}(n^3)$ cost of exact GP inference.
We introduce two constructions: ALAS, a single $d$-dimensional component and ALAS-Sep, an additive construction with separated per-dimension parameters.

\paragraph{ALAS}
We define ALAS as a single $d$-dimensional stationary component with dimension-wise scales and a joint modulation frequency.
Let $\tau=x-x'\in\mathbb{R}^d$. We set
\begin{equation}
\label{eq:alas_coupled}
k_{\text{ALAS}}(x,x')
=
w\,
\exp\!\left(-\sum_{j=1}^d (2\pi\delta_j |\tau_j|)^{\alpha}\right)\,
\cos\!\left(2\pi\,\gamma^\top \tau\right),
\end{equation}
where $w>0$, $\delta_j>0$, $\gamma\in\mathbb{R}^d$, and $\alpha\in(0,2]$ is shared across dimensions. When $\alpha=2$ and $\gamma=0$, \eqref{eq:alas_coupled} reduces to the ARD squared-exponential kernel. This formulation is expressive through the joint modulation, while remaining a single shared stability parameter $\alpha$ across all dimensions.

\paragraph{ALAS-Sep.}
To impose a stronger structural prior under limited data, we adopt an additive decomposition~\citep{kandasamy2015high,gardner2017discovering,rolland2018high},
which improves learnability by modeling the objective as a sum of low-dimensional effects.
We define
\begin{equation}
\label{eq:alas_additive}
k_{\text{ALAS}}^{\text{sep}}(x,x')
=
\sum_{j=1}^{d} k_{\text{ALAS}}^{(j)}(x_j-x'_j;\theta_j),
\end{equation}
with coordinate-wise one-dimensional components
\begin{equation}
\label{eq:alas_additive_comp}
k_{\text{ALAS}}^{(j)}(\tau;\theta_j)
=
w_j\,\kappa_{\alpha_j,\delta_j}(\tau)\cos(2\pi\gamma_j\tau),
\end{equation}
where $\theta_j=\{w_j,\alpha_j,\delta_j,\gamma_j\}$ and $\kappa_{\alpha_j,\delta_j}$ is defined in \eqref{eq:sas-kernel}.
This construction separates hyperparameters across dimensions and allows each coordinate to learn its own tail parameter $\alpha_j$.

\paragraph{Information gain under additive prior.}
A theoretical benefit of \eqref{eq:alas_additive} is that its spectrum decomposes across 1D components, which yields an information-gain bound linear in $d$ (see Appendix~\ref{app:additive_spectrum}).

\begin{lemma}[Additive information gain]
\label{lem:add_ig}
For an additive kernel $k=\sum_{j=1}^d k_j$, the maximum information gain satisfies
\begin{equation}
\gamma_T(k)\ \le\ \sum_{j=1}^d \gamma_T(k_j).
\end{equation}
\end{lemma}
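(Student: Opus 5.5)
The plan is to reduce the statement to a determinant inequality for positive semidefinite matrices, applied to a fixed design set, and only then maximise over the design. Fix any $A=\{x_1,\dots,x_T\}\subset\mathcal{X}$ and write $K_A^{(j)}$ for the Gram matrix of $k_j$ on $A$. Each $K_A^{(j)}$ is positive semidefinite, and the Gram matrix of $k=\sum_j k_j$ is $K_A=\sum_j K_A^{(j)}$. Setting $B_j:=\sigma^{-2}K_A^{(j)}\succeq 0$, the definition in \eqref{eq:gammaT-def} gives
\[
I(\mathbf{y}_A;f_A)=\tfrac12\log\det\Big(I+\textstyle\sum_{j=1}^d B_j\Big).
\]
It therefore suffices to show the subadditivity
\[
\det\Big(I+\textstyle\sum_j B_j\Big)\le\prod_j\det(I+B_j)
\qquad\text{for PSD } B_j .
\]
By induction on $d$, this reduces to the two-matrix case $\det(I+P+Q)\le\det(I+P)\det(I+Q)$ for PSD $P,Q$. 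For the induction step, take $P=\sum_{j<d}B_j$, which is again PSD, and $Q=B_d$.

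For the two-matrix case, let $M=(I+P)^{-1/2}$. Since $I+P\succeq I$, we have $0\preceq M^2\preceq I$. Factoring out $I+P$ gives
\[
\det(I+P+Q)=\det(I+P)\,\det(I+MQM).
\]
By Sylvester's identity $\det(I+XY)=\det(I+YX)$ with $X=MQ^{1/2}$ and $Y=Q^{1/2}M$, we get $\det(I+MQM)=\det(I+Q^{1/2}M^2Q^{1/2})$. Now $M^2\preceq I$ implies $Q^{1/2}M^2Q^{1/2}\preceq Q$. Since $\det(I+\cdot)$ is monotone in the Loewner order on PSD matrices (the eigenvalues are monotone by Weyl's inequality), we obtain $\det(I+MQM)\le\det(I+Q)$. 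This is the step I expect to need the most care: the naive comparison $MQM\preceq Q$ is false in general, and the Sylvester swap is what makes the comparison legitimate.

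Taking $\tfrac12\log$ of the subadditivity inequality gives $I(\mathbf{y}_A;f_A)\le\sum_j\tfrac12\log\det(I+\sigma^{-2}K_A^{(j)})$. Each summand on the right is at most $\gamma_T(k_j)$, because $A$ is an admissible design of size $T$ for $k_j$. Maximising the left-hand side over $A$ then yields $\gamma_T(k)\le\sum_j\gamma_T(k_j)$.

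As a cross-check, there is an information-theoretic route. Model $f=\sum_j f_j$ with independent $f_j\sim\mathcal{GP}(0,k_j)$. Then $f_A$ is a function of $(f_{j,A})_j$, so the data-processing inequality gives $I(\mathbf{y}_A;f_A)\le I(\mathbf{y}_A;(f_{j,A})_j)$. Bounding the latter by the sum of per-component information gains again uses the same determinant inequality, so I would present the linear-algebraic proof.
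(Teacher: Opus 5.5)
Your proof is correct, and it takes a genuinely different route from the paper.

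You work with Gram matrices on a fixed design. You write $K_A=\sum_j K_A^{(j)}$ and prove the subadditivity $\det(I+\sum_j B_j)\le\prod_j\det(I+B_j)$ for PSD $B_j$; your Sylvester-swap proof of the two-matrix case is sound, and you are right that $MQM\preceq Q$ can fail. You then maximise over $A$.

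The paper instead argues through the Mercer spectrum (Appendix~\ref{app:additive_spectrum} and the proof of Corollary~\ref{cor:alas_add_ig}):
\begin{itemize}
\item each $k_j$ is centered with respect to a product probability measure, so that by Proposition~\ref{prop:additive_spectrum} the nonzero spectrum of the additive operator is the multiset union of the component spectra;
\item the eigenvalue bound \eqref{eq:gammaT-eig} is applied to the sum;
\item the global top-$T$ eigenvalues are placed inside $\bigcup_j\{\lambda_{j,1},\dots,\lambda_{j,T}\}$, which splits $\sum_{h\le T}\log(1+\sigma^{-2}T\lambda_h)$ into per-component sums.
\end{itemize}
That route also yields the eigen-decay of the additive operator (Corollary~\ref{cor:alas_add_decay}) and feeds directly into the rate \eqref{eq:alas_ig}. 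However, what it actually shows is $\gamma_T(k)\le\frac12\sum_j\sum_{n\le T}\log(1+\sigma^{-2}T\lambda_{j,n})$. This is a sum of eigenvalue upper bounds for the centered components, not a sum of the $\gamma_T(k_j)$ themselves. Moreover, centering changes each $k_j$ by a rank-at-most-two kernel, which is harmless only at the level of rates.

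Your argument proves the inequality exactly as stated. It holds for arbitrary PSD summands, even ones sharing coordinates, and needs no product structure, centering, or eigen-decay assumption. Combined with Theorem~\ref{thm:regret} applied per coordinate, it gives \eqref{eq:alas_ig} directly.

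One detail is needed for that last step. The Gram matrix of $k_j$ on $A$ is the one-dimensional Gram matrix on the projected points $x_{1,j},\dots,x_{T,j}$, and these may repeat. So the one-dimensional $\gamma_T$ should be taken over multisets, or the repeats removed by a continuity argument.
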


Let $\alpha_{\min}:=\min_{j\in[d]}\alpha_j$. Applying Lemma~\ref{lem:add_ig} and Theorem~\ref{thm:regret} to each one-dimensional component gives
\begin{equation}
\label{eq:alas_ig}
\gamma_T\!\left(k_{\text{ALAS}}^{\text{sep}}\right)
\ \le\
\sum_{j=1}^{d}\tilde{\mathcal{O}}\!\left(T^{\frac{1}{1+\alpha_j}}\right)
\ \le\
\tilde{\mathcal{O}}\!\left(d\cdot T^{\frac{1}{1+\alpha_{\min}}}\right).
\end{equation}
Thus, the dimension dependence of information gain enters only through the linear factor $d$.
Moreover, the overall rate is governed by the least smooth coordinate, the one with the smallest $\alpha_j$.
Together with the standard GP-UCB reduction $R_T=\tilde{\mathcal{O}}(\sqrt{T\,\gamma_T})$, \eqref{eq:alas_ig} yields sublinear regret and preserves the same $T$-dependence as the one-dimensional bound, up to the factor $\sqrt d$.

\paragraph{Comparison to existing kernel classes.}
ALAS augments the powered-exponential kernel with a learnable tail parameter $\alpha$ and modulation $\gamma$, so it can adapt smoothness from data rather than fixing a kernel class.
Compared with spectral mixture kernels based on Gaussian components, ALAS uses an $\alpha$-stable spectral component with a learnable tail exponent to directly control spectral tail behavior and function roughness.
Compared with heavy-tailed process priors such as Student-$t$ processes, ALAS keeps standard GP inference and expresses heavy-tailed behavior through the kernel spectrum.~\citep{park2001efficient,wilson2013gaussian,shah2014student}

\paragraph{Practical learning.}
In implementation, we fit an exact GP by maximizing the marginal log-likelihood. We optimize an unconstrained variable and map it to $\alpha\in(0,2]$ through a smooth transform, so the constraint is always satisfied.
Weak priors on the noise and scale parameters help prevent degenerate fits when $n$ is small. We initialize $(\gamma,\delta)$ from an empirical spectrum heuristic and initialize $\alpha$ near $2$, letting marginal likelihood move it toward rougher regimes when supported by data.

Algorithm~\ref{alg:ALAS_bo} follows the standard BO loop; the only change is that the surrogate uses an ALAS kernel whose hyperparameters
$\Theta$ are re-estimated by marginal likelihood at each iteration. Here
$\Theta=\{\alpha,\gamma,\delta,w\}$ includes both stability and spectral parameters.
The two kernel constructions (\text{ALAS} and \text{ALAS-Sep}) share the same loop and differ only in how $k_{\text{ALAS}}(\cdot,\cdot;\Theta)$ is parameterized, see Algorithm~\ref{alg:alas_construction} in Appendix for details.

\begin{algorithm}[t]
\caption{Bayesian Optimization with ALAS}
\label{alg:ALAS_bo}
\textbf{Input:} search space $\mathcal{X} \subset \mathbb{R}^d$, budget $T$, initial size $n_0$, acquisition function $acq(\cdot)$. \\
\textbf{Initialize:} choose an ALAS kernel $k_{\text{ALAS}}(\cdot,\cdot;\Theta)$;
place a GP prior $f\sim\mathcal{GP}(0,k_{\text{ALAS}}(\cdot,\cdot;\Theta))$; collect $\mathcal{D}_{n_0}=\{(x_i,y_i)\}_{i=1}^{n_0}$.

\begin{algorithmic}[1]
    \FOR{$t = n_0, \dots, T-1$}
        \STATE \textbf{Hyperparameter learning}
        \STATE Fit $\hat{\Theta}_t$ by maximizing marginal log-likelihood on $\mathcal{D}_t$
        \STATE Compute posterior mean $\mu_t(\cdot)$ and variance $\sigma_t^2(\cdot)$ under $\hat{\Theta}_t$
        \STATE \textbf{Acquisition and query}
        \STATE $x_{t+1} \leftarrow \arg\max_{x \in \mathcal{X}} acq(x; \mu_t, \sigma_t)$
        \STATE Query the black-box objective and observe $y_{t+1} = f(x_{t+1}) + \varepsilon$; update $\mathcal{D}_{t+1} \leftarrow \mathcal{D}_t \cup \{(x_{t+1}, y_{t+1})\}$
    \ENDFOR
    
    \STATE \textbf{return} best observed point $x_{\text{best}}$ and the learned stability parameters $\hat{\alpha}$ (or $\{\hat{\alpha}_j\}$ for ALAS-Sep).
\end{algorithmic}
\end{algorithm}

\section{Experiments}
\subsection{Adaptivity to Function Smoothness}
A key motivation of the ALAS Kernel is adaptivity: it should be flexible enough to fit irregular objectives, yet automatically recover standard Gaussian behavior on smooth data.
We illustrate this behavior with two controlled one-dimensional experiments, comparing the 1D ALAS component with an RBF kernel under identical training data and likelihood settings.

\paragraph{Scenario 1: Fitting Roughness.}
We first consider the Weierstrass function, an example known for being everywhere continuous yet nowhere differentiable.
As shown in Fig.~\ref{fig:weierstrass}, the RBF posterior mean assumes excessive smoothness and fails to capture the narrow valley near the global minimum.
In contrast, ALAS adapts by learning a smaller $\alpha$ and can model local singularities better. This yields a tighter fit around high-curvature regions and improved predictive log-likelihood (PLL) on the test grid.

\paragraph{Scenario 2: Preserving Smoothness}
We next test whether ALAS becomes unnecessarily complex when the ground truth is smooth by sampling data from an RBF GP prior.
As shown in Fig.~\ref{fig:rbf_sample}, even when initialized with a heavier-tailed setting ($\alpha=1.5$), ALAS converges towards $\hat{\alpha} = 2.0$ and recovers an RBF-like posterior.
The posterior means and uncertainty are similar, indicating that ALAS does not overfit and remains competitive on standard smooth functions.

\paragraph{Remark.}
These two extremes illustrate the dual capability of the ALAS Kernel: it shifts toward heavy-tailed behavior when the objective is rough, and converges towards the Gaussian limit when the objective is smooth.

\begin{figure}[ht!]
    \centering
    \includegraphics[width=1\linewidth]{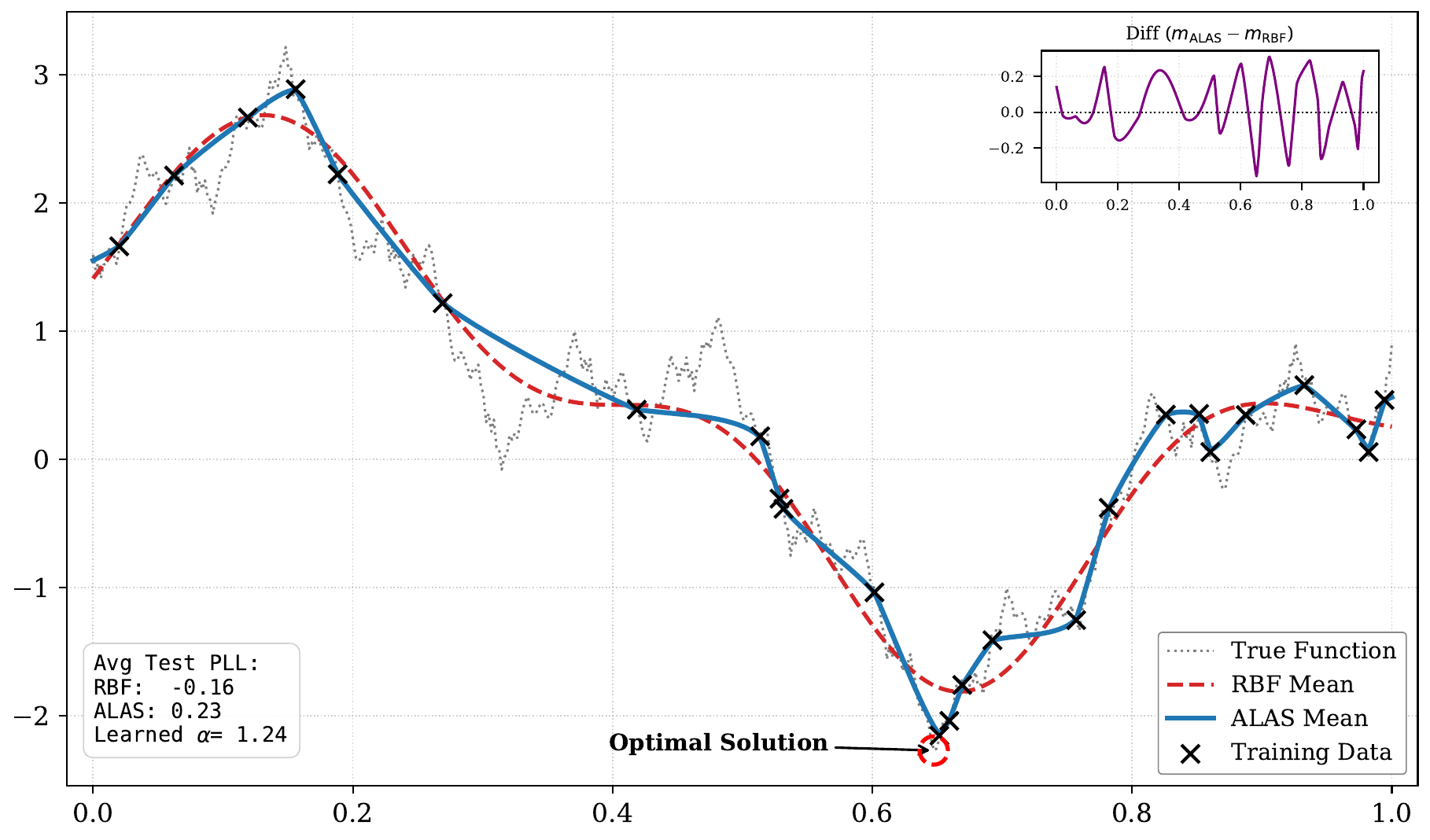}
\caption{
\textbf{Learning roughness beyond Gaussian kernels.}
We fit two GPs on the same Weierstrass training set ($n=25$): an RBF kernel and the 1D ALAS component $k_{\text{ALAS}}^{(1)}$. We report the average predictive log-likelihood (PLL) on a dense test grid. For this representative run, the 1D ALAS component achieves a higher PLL and learns $\hat\alpha = 1.24$.
See App.~\ref{app:additional_results} for seed-sweep statistics.
}
\label{fig:weierstrass}
\end{figure}

\begin{figure}[ht!]
    \centering
    \includegraphics[width=0.9\linewidth]{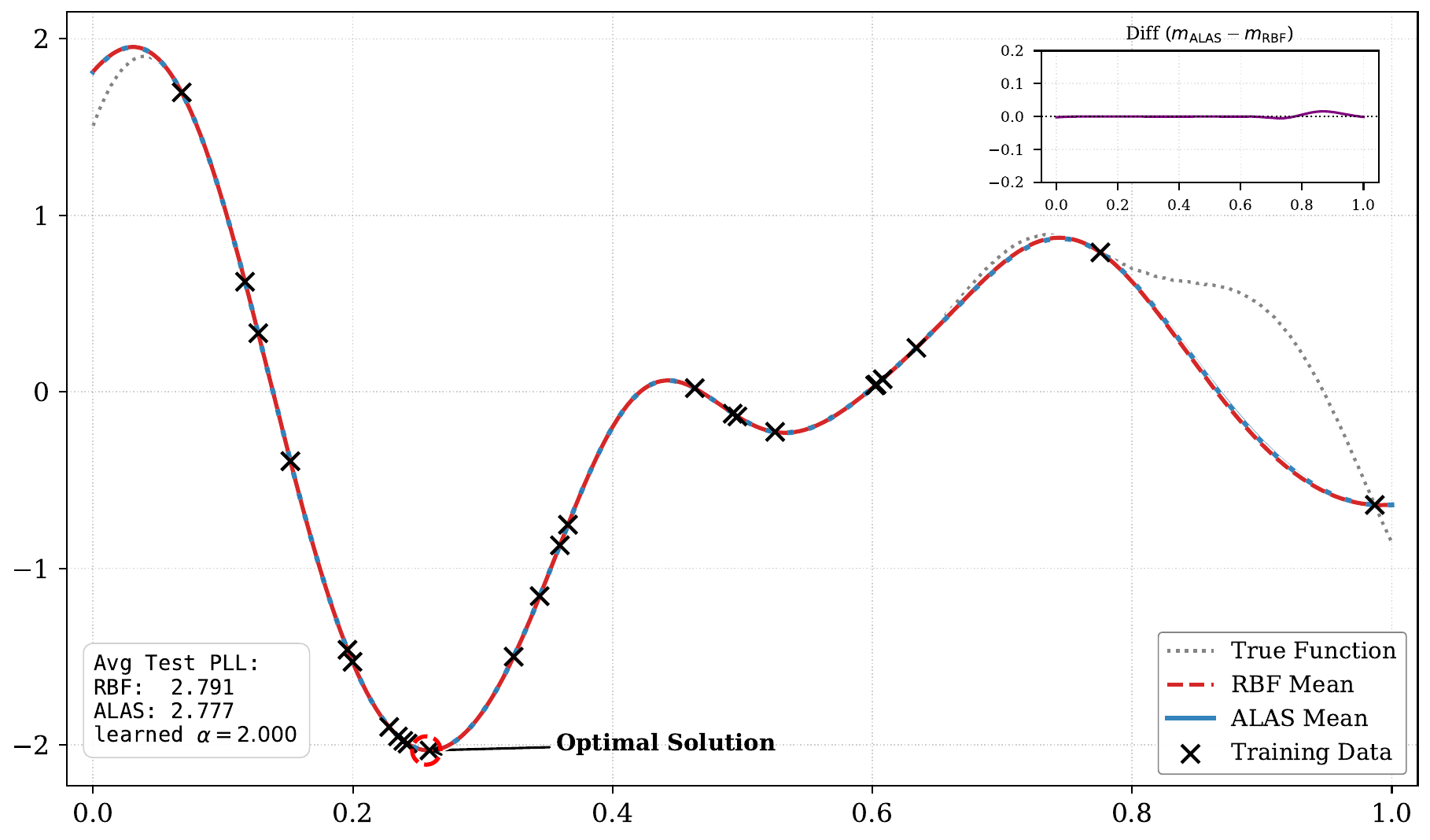}
\caption{
\textbf{Recovering Gaussian Smoothness.}
On a smooth function ($n=25$), the 1D ALAS component automatically adapts to the simple landscape by learning $\hat{\alpha} = 2.0$. Consequently, its posterior mean becomes virtually indistinguishable from the RBF baseline, showing that ALAS recovers Gaussian behavior on smooth data.
}
\label{fig:rbf_sample}
\end{figure}

\begin{figure*}[ht!] \centering \includegraphics[width=1\linewidth]{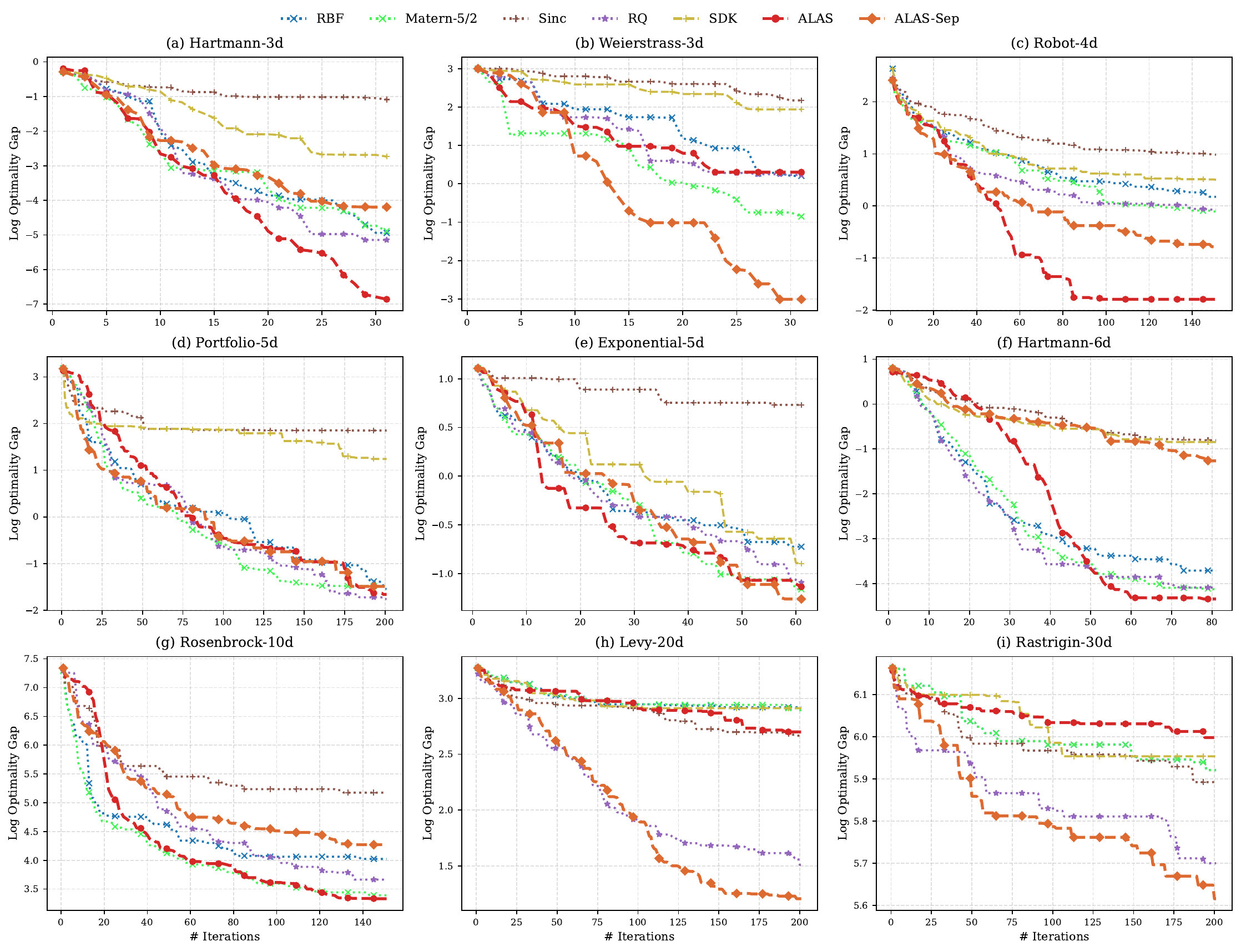} 
\caption{\textbf{Optimization performance on benchmarks.} We report the best-so-far log optimality gap over BO iterations (mean over 10 seeds) on tasks with $d\in[3,30]$ using Expected Improvement (EI). Lower is better.
ALAS (red) remains competitive across tasks and is particularly strong on interaction-heavy or irregular objectives.
ALAS-Sep (orange) shows clear gains on approximately decomposable or oscillatory landscapes, while remaining competitive elsewhere.} \label{fig:main_results} \end{figure*}

\subsection{Optimization Performance}
We evaluate the proposed ALAS framework on a diverse set of synthetic benchmarks and real-world problems, ranging from $d=3$ to $d=30$. We compare two variants of our framework: ALAS, a single $d$-dimensional stationary kernel with ARD scaling and one shared tail parameter, and ALAS-Sep, which sums per-dimension ALAS components and learns dimension-wise tail parameters. Baselines include standard stationary kernels (RBF, Mat\'ern-$5/2$, RQ) and spectral kernels (Sinc, SDK)~\citep{vargas2021gaussian,tobar2019band}. All methods share the same BO pipeline and optimization settings (see details in Appendix~\ref{app:impl_details}). 

Figure~\ref{fig:main_results} summarizes the results using Expected Improvement (EI). To provide a consistent measure of convergence across all tasks, we report the log optimality gap. For synthetic benchmarks and the Robot Pushing task, the global optimum is known. For the Portfolio Optimization task, we estimate the global maximum based on extensive offline evaluations of the surrogate model.

\paragraph{ALAS as a robust default.}
Across smoother benchmarks, ALAS achieves strong performance, indicating that learning $\alpha$ does not sacrifice performance on regular objectives. In these cases, $\alpha$ naturally converges toward the Gaussian boundary ($\alpha = 2$), effectively recovering a smooth spectral mixture kernel.

On tasks with strong feature interactions, such as Rosenbrock and the two real-world problems, ALAS often outperforms standard baselines and its separable counterpart. This suggests that a single $d$-dimensional component can better capture cross-dimensional dependencies.

We also observe a ``warm-up'' effect in several tasks: ALAS may improve more slowly in the first few iterations than fixed-smoothness baselines. This is expected, as learning the additional parameter $\alpha$ requires data. However, ALAS often catches up and overtakes later as the hyperparameters stabilize and better match the objective's structure.

\paragraph{When separability helps.}
ALAS-Sep is often strongest when the objective is closer to a sum of low-dimensional effects, or when different dimensions exhibit different degrees of roughness (e.g., Weierstrass-3d, Levy-20d, and Rastrigin-30d).
By learning a separate $\alpha_j$ per dimension, it can model sharp variation in some coordinates while keeping others smoother, which is harder to do with a single shared $\alpha$.

\paragraph{Interpreting the learned $\alpha$.}
A key feature of the ALAS family is its ability to adapt effective regularity by learning stability parameters from data. To monitor this behavior, we record the learned $\alpha$ at every BO iteration.
For ALAS-Sep, we summarize the per-iteration stability parameter by $\bar{\alpha}:=\frac{1}{d}\sum_{j=1}^d \alpha_j$ (for ALAS this equals the single learned $\alpha$).
Table~\ref{tab:alpha_stats} reports $\alpha_{\text{final}}$ as the mean $\pm$ std of $\bar{\alpha}_T$ across seeds, and $\alpha_{\text{range}}$ as the 10th--90th percentiles of the pooled values $\{\bar{\alpha}_t\}$ over all seeds and BO iterations.
We only use these statistics for interpretation, not for tuning or selection. Appendix~\ref{app:additional_results} provides an additional 1D interpretation with known Fourier-coefficient decay rates.

Overall, the learned $\alpha$ is consistent with the empirical regularity of the objectives.
On smoother benchmarks such as Exponential and Hartmann, ALAS typically converges towards the Gaussian boundary $\alpha= 2$.
On more irregular objectives, such as Weierstrass and Portfolio, it often selects a smaller $\alpha$ with higher variability.
On mixed landscapes like Levy and Rastrigin, it tends to converge to intermediate values.

In practice, ALAS is a safer choice when cross-dimensional interactions are expected, while ALAS-Sep is preferable when the objective is approximately additive or dimensions may have different roughness.

\begin{table}
  \caption{Learned effective stability parameter for the ALAS family. For ALAS-Sep, $\alpha$ denotes the average of the dimension-wise $\alpha_j$ values. $\alpha_{\text{range}}$ gives the 10th--90th percentiles over the aggregated iteration history.}

  \label{tab:alpha_stats}
  \begin{center}
    \begin{small}
      \begin{sc}
        \begin{tabular}{lcc}
          \toprule
          Benchmark ($d$) & $\alpha_{\text{final}}$ & $\alpha_{\text{range}}$ (10--90\%) \\
          \midrule
          \multicolumn{3}{l}{\textit{High final $\alpha$}} \\
          Exponential (5) & 1.96 $\pm$ 0.03 & [1.60, 2.00] \\
          Hartmann (6)    & 1.92 $\pm$ 0.03 & [1.86, 1.95] \\
          Rastrigin (30)  & 1.91 $\pm$ 0.03 & [1.50, 2.00] \\
          Levy (20)       & 1.89 $\pm$ 0.04 & [1.60, 2.00] \\
          Hartmann (3)    & 1.86 $\pm$ 0.06 & [1.85, 2.00] \\
          Rosenbrock (10) & 1.83 $\pm$ 0.04 & [1.77, 1.99] \\
          \midrule
          \multicolumn{3}{l}{\textit{Lower final $\alpha$}} \\
          Robot (4)       & 1.82 $\pm$ 0.21 & [1.02, 2.00] \\
          Portfolio (5)   & 1.67 $\pm$ 0.12 & [1.48, 1.91] \\
          Weierstrass (3) & 1.56 $\pm$ 0.34 & [1.26, 2.00] \\
          \bottomrule
        \end{tabular}
      \end{sc}
    \end{small}
  \end{center}
  \vskip -0.1in
\end{table}

\section{Conclusion}
In this work, we introduce ALAS, a stationary GP kernel family that learns a tail parameter $\alpha$ to adapt its effective regularity from data.
To better handle multi-dimensional inputs under limited evaluations, we also propose ALAS-Sep, a separable construction that sums one-dimensional ALAS components and learns dimension-specific tail parameters.
The two parameterizations are complementary: ALAS is suited to objectives with cross-dimensional interactions, while ALAS-Sep is useful when the objective is approximately additive or dimensions differ in roughness.
We support this design with theory linking spectral tails to Mercer eigenvalue decay and information gain, and with empirical results showing robust performance across diverse benchmarks.

While ALAS improves statistical efficiency, reducing the cubic computational cost with scalable GP approximations remains an important next step.
On the modeling side, a useful direction is to study intermediate structures between shared-$\alpha$ ALAS and fully per-dimension ALAS-Sep.
Sparse interactions or low-dimensional variable groups are natural candidates for higher-dimensional BO under tight budgets.

\section*{Acknowledgements}
We thank Yi Zhang for helpful discussions on spectral mixture kernels and for sharing an earlier implementation. C. Hua is partly supported by the National Natural Science Foundation of China (72301172, 72394370:72394375, 72495130:72495132) and the Shanghai Jiao Tong University Office of Liberal Arts (ZHWK2502).

\section*{Impact Statement}

This work studies kernel design for Bayesian optimization through learnable spectral tail behavior, providing theoretical guarantees that connect spectral decay, information gain, and regret. The results offer a principled way to adapt model complexity to unknown smoothness without introducing additional societal, ethical, or safety concerns beyond those standard in Bayesian optimization.

\bibliography{references}
\bibliographystyle{icml2026}

\newpage
\appendix
\onecolumn
\section{Appendix: Theoretical Analysis and Proofs}

In this appendix, we provide detailed proofs for the spectral properties of the proposed Additive Learnable $\alpha$-Stable (ALAS) Kernels. We establish the connection between the tail behavior of the spectral density and the decay rate of the Mercer eigenvalues, which ultimately governs the information gain and regret bounds in Bayesian Optimization.

\subsection{Spectral Tail and Eigenvalue Decay of the 1D ALAS Component}
\label{app:1d_decay}

We use the same $2\pi$-Fourier convention as in Section \ref{sec:theory},
$k(\tau)=\int_{\mathbb{R}} e^{i2\pi\omega\tau}S(\omega)\,d\omega$.
This choice only affects constant factors and does not change the asymptotic tail or eigenvalue rates.

\begin{proposition}[Tail of symmetric $\alpha$-stable density]
\label{prop:stable_tail}
Let $p(\omega;\alpha,\delta)$ be the density of a symmetric $\alpha$-stable distribution with
$\alpha\in(0,2)$ and scale $\delta>0$, parameterized by the characteristic function
$\varphi(t)=\exp(-|\delta t|^\alpha)$. Then there exist constants $c_1,c_2>0$ and $\omega_0>0$ such that
for all $|\omega|\ge \omega_0$,
\begin{equation}
\label{eq:stable_tail_2sided}
c_1 |\omega|^{-(1+\alpha)} \;\le\; p(\omega;\alpha,\delta) \;\le\; c_2 |\omega|^{-(1+\alpha)}.
\end{equation}
In particular, $p(\cdot;\alpha,\delta)$ is regularly varying at infinity with $-(1+\alpha)$.
\end{proposition}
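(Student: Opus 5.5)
We first reduce to a single scale. Since $\varphi(t)=\exp(-|\delta t|^\alpha)$, the density satisfies $p(\omega;\alpha,\delta)=\delta^{-1}p(\omega/\delta;\alpha,1)$. It therefore suffices to prove the precise asymptotic
\[
p(\omega;\alpha,1)\sim C_\alpha|\omega|^{-(1+\alpha)},\qquad C_\alpha=\tfrac{1}{\pi}\Gamma(1+\alpha)\sin(\pi\alpha/2)>0,
\]
as $|\omega|\to\infty$. The two-sided bound \eqref{eq:stable_tail_2sided} then follows with, for instance, $c_1=\tfrac12 C_\alpha\delta^{\alpha}$, $c_2=2C_\alpha\delta^{\alpha}$, and $\omega_0$ large enough. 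Regular variation with index $-(1+\alpha)$ is immediate from the exact asymptotic. By symmetry we may take $\omega>0$.

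For the asymptotic we use Fourier inversion, which is legitimate because $\varphi\in L^1$:
\[
p(\omega)=\frac1\pi\int_0^\infty \cos(\omega t)\,e^{-t^\alpha}\,dt .
\]

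\textbf{Case $\alpha\in(0,1)$.} We use the classical series (Pólya/Bergström):
\[
p(\omega)=\frac{1}{\pi}\sum_{k\ge1}\frac{(-1)^{k+1}}{k!}\Gamma(1+k\alpha)\sin\!\big(\tfrac{k\pi\alpha}{2}\big)\,\omega^{-1-k\alpha}.
\]
To derive it, rotate the contour $t\mapsto$ a ray in the complex plane where $e^{i\omega t}$ decays, then expand $e^{-t^\alpha}$ in a power series and integrate term by term using $\int_0^\infty t^{k\alpha}e^{-s}\,\cdots$. For $\alpha<1$ this series converges for all $\omega>0$. Its leading term is $C_\alpha\omega^{-1-\alpha}$, and the remainder is $O(\omega^{-1-2\alpha})$.

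\textbf{Case $\alpha\in[1,2)$.} The series is only asymptotic here, so we argue directly.

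\begin{enumerate}
\item Integrate by parts once to get $p(\omega)=\frac{1}{\pi\omega}\int_0^\infty \sin(\omega t)\,\alpha t^{\alpha-1}e^{-t^\alpha}dt$.
\item Split $\alpha t^{\alpha-1}e^{-t^\alpha}=\alpha t^{\alpha-1}+g(t)$ on $[0,1]$, and handle the rest with a smooth cutoff $\chi$.
\item Use the known integral $\int_0^\infty \sin(\omega t)\,t^{\alpha-1}\chi(t)\,dt=\Gamma(\alpha)\sin(\pi\alpha/2)\,\omega^{-\alpha}+O(\omega^{-N})$. This is an Abel-type computation: the smooth cutoff contributes only rapidly decaying error.
\item For the remainder $g(t)=-\alpha t^{2\alpha-1}+\dots$, which is smoother at $0$, the same computation gives $O(\omega^{-2\alpha})$. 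Any $C^\infty$ part away from $0$ decays faster than any power by repeated integration by parts.
\end{enumerate}

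Combining these gives $p(\omega)=\frac{\alpha\Gamma(\alpha)\sin(\pi\alpha/2)}{\pi}\omega^{-1-\alpha}(1+o(1))$. This matches $C_\alpha$, since $\alpha\Gamma(\alpha)=\Gamma(1+\alpha)$.

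\textbf{Main obstacle.} The delicate step is controlling the remainder in the second case rigorously. The non-smooth singularities at $t=0$ (of type $t^{k\alpha-1}$) must each be shown to contribute at the predicted order, while the smooth part contributes negligibly. We will handle this with the standard lemma that the Fourier transform of $t^{\beta}\chi(t)$ with $\beta>-1$, $\beta\notin\mathbb{Z}$, is $\Gamma(\beta+1)\,\omega^{-\beta-1}$ times a phase, plus $O(\omega^{-\infty})$. We apply it to finitely many terms of the expansion, with Taylor remainder $O(t^{M})$ for large $M$, and bound the remainder via $M$ integrations by parts.

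A shortcut we may take instead is to cite the known tail result $P(|X|>x)\sim c\,x^{-\alpha}$ (Samorodnitsky–Taqqu). Combined with the unimodality and monotone-tail property of symmetric stable densities, the monotone density theorem then yields the density asymptotic directly. Positivity of $C_\alpha$ for $\alpha\in(0,2)$ is clear since $\sin(\pi\alpha/2)>0$ there.
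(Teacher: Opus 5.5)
Your proposal is correct, but it takes a different route from the paper, which gives no derivation. The paper only cites Samorodnitsky--Taqqu for the tail behaviour of symmetric $\alpha$-stable laws and remarks that $\alpha=2$ is Gaussian. You instead derive the density asymptotics from $\varphi$ itself. You use the scaling identity $p(\omega;\alpha,\delta)=\delta^{-1}p(\omega/\delta;\alpha,1)$, Fourier inversion, the convergent Bergstr\"om series for $\alpha<1$, and Erd\'elyi-type endpoint asymptotics after one integration by parts for $\alpha\in[1,2)$.

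This buys two things the citation leaves invisible. First, you get explicit constants, $p(\omega;\alpha,\delta)\sim \pi^{-1}\Gamma(1+\alpha)\sin(\pi\alpha/2)\,\delta^{\alpha}|\omega|^{-(1+\alpha)}$. Second, you get an exact asymptotic equivalence rather than two-sided bounds. That equivalence is what actually supports the ``regularly varying'' clause, since the displayed two-sided inequality alone does not imply regular variation.

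Your fallback is essentially the paper's citation made precise. It combines the tail law $P(|X|>x)\sim c\,x^{-\alpha}$, monotonicity of $p$ on $(0,\infty)$ (e.g.\ from the Gaussian variance-mixture representation), and the monotone density theorem. This matters because the classical tail result is usually stated for $P(|X|>x)$, not for the density.

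Two small repairs to the direct route:
\begin{enumerate}
\item Drop the restriction $\beta\notin\mathbb{Z}$. The endpoint lemma $\int_0^\infty e^{i\omega t}t^{\beta}\chi(t)\,dt=\Gamma(\beta+1)e^{i\pi(\beta+1)/2}\omega^{-\beta-1}+O(\omega^{-N})$ holds for every $\beta>-1$. For integer $\beta$ it is just repeated integration by parts with boundary terms at $0$. You need this, because you apply the lemma at $\beta=\alpha-1=0$ when $\alpha=1$ (or you can use the explicit Cauchy density). Also, the exponents $k\alpha-1$ in $g$ can be integers, e.g.\ $2\alpha-1=2$ at $\alpha=3/2$.
\item The ``main obstacle'' is lighter than you expect. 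You only need the $g$-contribution to be $o(\omega^{-\alpha})$. For $\alpha\in[1,2)$, near $0$ the function $g$ and its derivatives behave like those of $-\alpha t^{2\alpha-1}$, so $g(0)=0$ and $(g\chi)''\in L^1$. Two integrations by parts on $g\chi$ then give $o(\omega^{-2})$, and no expansion beyond the leading term is required.
\end{enumerate}
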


\textit{Proof.} This is a standard property of symmetric $\alpha$-stable laws (see ~\citep{samorodnitsky1994stable}). At the Gaussian boundary $\alpha=2$, $p(\omega;2,\delta)$ is Gaussian and decays as $\exp(-c\omega^2)$ for some $c>0$,
in contrast to the polynomial tail in~\eqref{eq:stable_tail_2sided}.

\begin{lemma}[Dominance of the heaviest tail]
\label{lemma:tail_dominance}
Let
\begin{equation}
S(\omega)=\sum_{j=1}^J w_j\Big[p(\omega-\gamma_j;\alpha_j,\delta_j)+p(\omega+\gamma_j;\alpha_j,\delta_j)\Big],
\qquad w_j\ge 0,
\end{equation}
and assume $\sum_{j=1}^J w_j>0$. Define $\alpha_{\min}:=\min\{\alpha_j:\, w_j>0\}$. If $\alpha_{\min}<2$,
then there exist constants $C_1,C_2>0$ and $\omega_0>0$ such that for all $|\omega|\ge \omega_0$,
\begin{equation}
\label{eq:mix_tail_2sided}
C_1|\omega|^{-(1+\alpha_{\min})}\le S(\omega)\le C_2|\omega|^{-(1+\alpha_{\min})}.
\end{equation}
\end{lemma}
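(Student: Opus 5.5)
The plan is to reduce \Cref{lemma:tail_dominance} to \Cref{prop:stable_tail} applied term by term, together with two elementary facts: shifting a regularly varying density does not change its polynomial tail exponent, and among finitely many polynomial tails the heaviest one dominates. I partition the indices with $w_j>0$ into three classes. Class $J_0$ contains the indices with $\alpha_j=\alpha_{\min}$; it is nonempty. Class $J_1$ contains the indices with $\alpha_{\min}<\alpha_j<2$. Class $J_2$ contains the indices with $\alpha_j=2$. Indices with $w_j=0$ contribute nothing and are discarded.

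\textbf{Step 1 (shift invariance).} For each $j$ with $\alpha_j<2$, \Cref{prop:stable_tail} gives
\[
c_{1,j}|u|^{-(1+\alpha_j)}\le p(u;\alpha_j,\delta_j)\le c_{2,j}|u|^{-(1+\alpha_j)} \quad\text{for } |u|\ge \omega_{0,j}.
\]
I substitute $u=\omega\mp\gamma_j$. Once $|\omega|\ge 2(|\gamma_j|+\omega_{0,j})$, we have $|u|\ge \omega_{0,j}$ and $\tfrac12|\omega|\le |u|\le \tfrac32|\omega|$. It follows that $p(\omega\mp\gamma_j;\alpha_j,\delta_j)$ is bounded above and below by constant multiples of $|\omega|^{-(1+\alpha_j)}$. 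The new constants pick up the factors $(3/2)^{-(1+\alpha_j)}$ and $2^{1+\alpha_j}$. For $j\in J_2$, the density is Gaussian, so $p(\omega\mp\gamma_j;2,\delta_j)\le C e^{-c(|\omega|/2)^2}$ for $|\omega|\ge 2|\gamma_j|$.

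\textbf{Step 2 (lower bound).} All terms are nonnegative, so I can drop every term outside $J_0$. Summing the Step 1 lower bounds over $J_0$ gives
\[
S(\omega)\ge \Big(\sum_{j\in J_0}2w_j c'_{1,j}\Big)|\omega|^{-(1+\alpha_{\min})}=:C_1|\omega|^{-(1+\alpha_{\min})}.
\]
The constant $C_1$ is positive because $J_0$ is nonempty and its weights are positive.

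\textbf{Step 3 (upper bound).} Each $J_0$ term is already $O(|\omega|^{-(1+\alpha_{\min})})$. For $j\in J_1$ and $|\omega|\ge1$, we have $|\omega|^{-(1+\alpha_j)}\le |\omega|^{-(1+\alpha_{\min})}$. For $j\in J_2$, the Gaussian bound satisfies $e^{-c\omega^2/4}\le C'|\omega|^{-(1+\alpha_{\min})}$ for all $|\omega|\ge1$, because $\sup_{|\omega|\ge 1}|\omega|^{1+\alpha_{\min}}e^{-c\omega^2/4}<\infty$. Summing these finitely many bounds gives $C_2$. I then take $\omega_0$ to be the maximum of $1$ and the thresholds from Step 1.

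The only mildly delicate point is Step 1: the shift has to be absorbed uniformly, and the Gaussian components in $J_2$ must be handled separately, since \Cref{prop:stable_tail} only covers $\alpha<2$. Both are routine once $\omega_0$ is chosen large relative to $\max_j|\gamma_j|$ and $\max_j\omega_{0,j}$. Finiteness of $J$ is essential, because it makes the maxima and sums of constants finite.
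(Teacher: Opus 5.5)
Your proposal is correct and follows essentially the same route as the paper. Both arguments apply Proposition~\ref{prop:stable_tail} term by term, absorb the shifts $\pm\gamma_j$ by taking $\omega_0$ large relative to $\max_j(|\gamma_j|+\omega_{0,j})$, get the lower bound from the terms with $\alpha_j=\alpha_{\min}$ alone, and get the upper bound by comparing exponents.

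Your version is a bit more careful than the paper's. The paper invokes Proposition~\ref{prop:stable_tail} for every active component, including Gaussian ones with $\alpha_j=2$, where that proposition does not apply and only an upper bound holds. It also uses $|\omega|^{-(1+\alpha_j)}\le|\omega|^{-(1+\alpha_{\min})}$ without stating that this requires $|\omega|\ge 1$. Your separate class $J_2$ and your choice $\omega_0\ge 1$ close both of these small gaps.
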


Notably, the 1D ALAS component in \eqref{eq:alas_shifted_spectrum} is a special case of the above form with $J=1$ and
$w_1=w/2$, so the tail exponent is governed by its $\alpha$.

\begin{proof}
Fix any component $j$ with $w_j>0$. By Proposition~\ref{prop:stable_tail}, there exist constants
$a_j,b_j>0$ and $\omega_{0,j}>0$ such that for all $|u|\ge \omega_{0,j}$,
\begin{equation}\label{eq:stable_tail_component_bound}
a_j|u|^{-(1+\alpha_j)} \le p(u;\alpha_j,\delta_j)\le b_j|u|^{-(1+\alpha_j)}.
\end{equation}
Choose
\begin{equation}\label{eq:omega0_choice}
\omega_0 \ge \max_{j}\big(\omega_{0,j}+|\gamma_j|\big).
\end{equation}
Then for all $|\omega|\ge \omega_0$ we have:
\begin{equation}\label{eq:shift_asymp}
|\omega|-|\gamma_j|\le|\omega\pm\gamma_j|\le|\omega|+|\gamma_j|,
\end{equation}
which implies $|\omega\pm\gamma_j|\asymp|\omega|$. Combining \eqref{eq:stable_tail_component_bound}
and \eqref{eq:shift_asymp} yields
\begin{equation}\label{eq:shifted_tail_theta}
p(\omega\pm\gamma_j;\alpha_j,\delta_j)=\Theta\!\left(|\omega|^{-(1+\alpha_j)}\right).
\end{equation}

\textbf{Upper bound.}
Since $\alpha_j\ge \alpha_{\min}$ for every active component,
\begin{align}
S(\omega)
&\le \sum_{j:w_j>0} 2w_j\, b_j \,|\omega|^{-(1+\alpha_j)} \notag\\
&\le \Big(\sum_{j:w_j>0}2w_j b_j\Big)\,|\omega|^{-(1+\alpha_{\min})}
=: C_2 |\omega|^{-(1+\alpha_{\min})}.
\label{eq:tail_dom_upper}
\end{align}

\textbf{Lower bound.}
Consider the components with $\alpha_j=\alpha_{\min}$ and $w_j>0$.
Then
\begin{align}
S(\omega)
&\ge \sum_{j:w_j>0,\ \alpha_j=\alpha_{\min}} 2w_j\, a_j \,|\omega|^{-(1+\alpha_{\min})} \notag\\
&=: C_1\,|\omega|^{-(1+\alpha_{\min})},
\label{eq:tail_dom_lower}
\end{align}
where $C_1>0$ because at least one such component exists by definition of $\alpha_{\min}$.
\end{proof}

\begin{proposition}[1D eigenvalue decay]
\label{prop:eigen_decay_1d_app}
Let $\mathcal{X}=[0,L]\subset\mathbb{R}$ and let $k(\tau)$ be a continuous stationary kernel on
$\mathcal{X}$ with spectral density $S(\omega)$. We take $\nu$ to be the Lebesgue measure on $[0,L]$.
Consider the Mercer operator
\begin{equation}\label{eq:mercer_op}
(Tf)(x)=\int_{\mathcal{X}} k(x-x')f(x')\,dx' \quad \text{on } L_2(\mathcal{X}).
\end{equation}
Assume $S(\omega)$ satisfies \eqref{eq:mix_tail_2sided} for some $\alpha_{\min}\in(0,2)$.
Then its eigenvalues $\{\lambda_h\}_{h\ge 1}$ satisfy
\begin{equation}
\label{eq:eigen_decay_rate}
\lambda_h = \Theta\!\left(h^{-(1+\alpha_{\min})}\right)\qquad (h\to\infty).
\end{equation}
\end{proposition}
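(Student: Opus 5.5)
The plan is to invoke Widom's classical asymptotic theory for truncated convolution operators (Widom 1963, already cited in the related work). We then check that the tail bounds \eqref{eq:mix_tail_2sided} from Lemma~\ref{lemma:tail_dominance} satisfy its hypotheses.

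Widom's theorem concerns an operator $T$ on $L_2[0,L]$ whose kernel is $k(x-x')$, where $k$ is the Fourier transform of an even, integrable, nonnegative spectral density $S$. If $S$ is eventually decreasing and regularly varying at infinity, the eigenvalues satisfy $\lambda_h \sim S(\pi h/L)$ up to the Fourier-convention constant. Equivalently, the eigenvalue counting function $N(\varepsilon)=\#\{h:\lambda_h>\varepsilon\}$ matches the Lebesgue measure of the set $\{\omega: S(\omega)>\varepsilon\}$ scaled by $L$. With $S(\omega)\asymp|\omega|^{-(1+\alpha_{\min})}$, this gives $\lambda_h=\Theta(h^{-(1+\alpha_{\min})})$.

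The two-sided bound in \eqref{eq:mix_tail_2sided} is only a $\Theta$-statement, not exact regular variation, and $S$ may fail to be monotone at moderate frequencies because of the shifts $\pm\gamma_j$. To avoid needing Widom's full hypotheses, I will carry out the two directions separately with comparison arguments. Both rely on Courant--Fischer min--max and on the quadratic form
\[
\langle Tf,f\rangle=\int_{\mathbb{R}} S(\omega)\,|\hat f(\omega)|^2\,d\omega,
\]
where $f\in L_2[0,L]$ is extended by zero so that $\hat f$ is entire of exponential type.

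\textbf{Upper bound.} Write $\beta=1+\alpha_{\min}$. I will majorize $S\le C\,S_\beta$, where $S_\beta(\omega)=(1+\omega^2)^{-\beta/2}$; this holds globally because $S$ is bounded and obeys the tail bound. The function $S_\beta$ is, up to constants, the spectral density of a Matérn kernel with $\nu=\alpha_{\min}/2$. Its truncated operator on $[0,L]$ has eigenvalues $\Theta(h^{-\beta})$, which follows from Widom since $S_\beta$ is monotone and regularly varying, or alternatively from the Sobolev-space entropy numbers of $H^{\beta/2}[0,L]$. Operator monotonicity ($T\preceq C\,T_\beta$ in the quadratic-form sense) plus min--max then gives $\lambda_h(T)\le C\lambda_h(T_\beta)=O(h^{-\beta})$.

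\textbf{Lower bound.} This is the harder direction. A lower bound on $S$ holds only for $|\omega|\ge\omega_0$, so we cannot minorize by a global Matérn density. I will handle it with an explicit test subspace. Take $V_n=\mathrm{span}\{e_m\}$, where the $e_m$ are smooth bumps in $[0,L]$ modulated to frequencies $\omega_m\in[\omega_0+n,\omega_0+2n]$ and spaced so that their Fourier transforms are essentially disjoint, with $\dim V_n\asymp n$. Each $\hat e_m$ concentrates near $|\omega|\asymp n$, where $S\ge C_1 n^{-\beta}$. Discarding the (nonnegative) low-frequency part of the integral gives $\langle Tf,f\rangle\ge c\,n^{-\beta}\|f\|^2$ on $V_n$, up to controllable leakage from the tails of $\hat e_m$. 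Min--max then yields $\lambda_{cn}\gtrsim n^{-\beta}$. Alternatively, I can split $S=S\mathbf 1_{|\omega|\ge\omega_0}+S\mathbf 1_{|\omega|<\omega_0}$. The low-frequency piece has compactly supported spectrum, so its truncated operator has super-exponentially decaying eigenvalues. The high-frequency piece is $\ge C_1\,S_\beta-C'\mathbf 1_{|\omega|<\omega_0}$, and a Weyl-inequality perturbation $\lambda_{h+m}(A+B)\le\lambda_h(A)+\lambda_m(B)$ with $m\approx h$ transfers the Matérn lower bound at the cost of a constant. I would try this perturbation route first because it reuses the upper-bound machinery.

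The main obstacle is this lower bound, specifically controlling the compactly supported correction. Its eigenvalues decay like those of the prolate spheroidal (sinc-kernel) operator, i.e.\ faster than any polynomial. So for large $h$ the Weyl shift is harmless, and combining the two directions gives \eqref{eq:eigen_decay_rate}.
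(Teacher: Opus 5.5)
Your argument is correct, but it takes a genuinely different and more careful route than the paper's.

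\textbf{The paper's route.} The paper applies Widom's theorem directly to $S$, in the form $N(\varepsilon)\asymp|\{\omega:S(\omega)>\varepsilon\}|$. It reads off $N(\varepsilon)=\Theta(\varepsilon^{-1/(1+\alpha_{\min})})$ from the two-sided tail bound and then inverts the counting function. It never checks Widom's monotonicity and regular-variation hypotheses. As you point out, a shifted mixture with only $\Theta$-tails need not satisfy them.

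\textbf{Your route.} You use Widom, or the approximation numbers of the embedding $H^{\beta/2}[0,L]\hookrightarrow L_2[0,L]$, only for one monotone reference density: $S_\beta=(1+\omega^2)^{-\beta/2}$, the spectral density of a Mat\'ern-$\alpha_{\min}/2$ kernel. You then transfer its $\Theta(h^{-\beta})$ asymptotics to $T$ through the form identity $\langle Tf,f\rangle=\int S|\hat f|^2$.
\begin{itemize}
\item The upper bound follows from Courant--Fischer and $S\le CS_\beta$.
\item For the lower bound, the pointwise inequality $S\ge C_1S_\beta-C_1\mathbf 1_{|\omega|<\omega_0}$ and Weyl's inequality give $C_1\lambda_{2h-1}(T_\beta)\le\lambda_h(T)+C_1\lambda_h(P_{\omega_0})$. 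Here $P_{\omega_0}$ is the sinc (time--band-limiting) operator, whose prolate eigenvalues decay super-exponentially.
\end{itemize}
In this perturbation route the nonnegative piece $S\mathbf 1_{|\omega|<\omega_0}$ is simply discarded. The only spectrum you need to control is that of $P_{\omega_0}$, so your remark about the low-frequency piece of $S$ itself is not needed, and neither is the test-subspace alternative.

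\textbf{Comparison.} Your approach actually proves the ``$\asymp$'' step that the paper takes on faith. The price is importing two standard facts: Mat\'ern/Sobolev eigenvalue asymptotics and Slepian decay. The paper's version is shorter and makes the phase-space counting intuition explicit.

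\textbf{One caveat.} The global majorization $S\le CS_\beta$ uses boundedness of $S$. This holds for the stable mixture of Lemma~\ref{lemma:tail_dominance}, but it is not implied by \eqref{eq:mix_tail_2sided} alone. For a general integrable $S$, split off $S\mathbf 1_{|\omega|<\omega_0}$ and use Weyl's inequality in the upper bound as well. This piece has an entire kernel, so its eigenvalues decay faster than any power.
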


Here we focus on $\alpha\in(0,2)$, where the spectrum has polynomial tails. The Gaussian boundary $\alpha=2$
yields super-exponential decay and leads to poly-logarithmic information gain, as discussed in the main text.

\begin{proof}
Let $N(\varepsilon):=\#\{h:\lambda_h>\varepsilon\}$ be the eigenvalue counting function.
For stationary kernels on an interval, Widom's theorem \citep{widom1963asymptotic} implies that as $\varepsilon\downarrow 0$,
\begin{equation}\label{eq:widom_simple}
N(\varepsilon)\ \asymp\ \big|\{\omega\in\mathbb{R}: S(\omega)>\varepsilon\}\big|,
\end{equation}
where $|\cdot|$ denotes the (1D) Lebesgue measure. The constants in $\asymp$ depend on $L$ and the Fourier convention but not on $\varepsilon$. Boundary effects are therefore absorbed into constants, while the decay exponent is governed by the spectral tail.

By \eqref{eq:mix_tail_2sided}, there exist $C_1,C_2>0$ and $\omega_0>0$ such that for all $|\omega|\ge\omega_0$,
\begin{equation}\label{eq:S_tail_app}
C_1|\omega|^{-(1+\alpha_{\min})}\ \le\ S(\omega)\ \le\ C_2|\omega|^{-(1+\alpha_{\min})}.
\end{equation}
Fix $\varepsilon$ small so that $R_2(\varepsilon):=(C_2/\varepsilon)^{\frac{1}{1+\alpha_{\min}}}\ge \omega_0$.
Then $S(\omega)>\varepsilon$ implies $|\omega|\le R_2(\varepsilon)$ up to the finite region $|\omega|<\omega_0$, hence
\begin{equation}\label{eq:set_meas_upper_app}
\big|\{\omega:S(\omega)>\varepsilon\}\big|
=\mathcal{O}\!\left(\varepsilon^{-\frac{1}{1+\alpha_{\min}}}\right).
\end{equation}
Similarly, for $R_1(\varepsilon):=(C_1/\varepsilon)^{\frac{1}{1+\alpha_{\min}}}\ge \omega_0$,
the lower bound in \eqref{eq:S_tail_app} gives $S(\omega)>\varepsilon$ for $\omega_0\le|\omega|\le R_1(\varepsilon)$, so
\begin{equation}\label{eq:set_meas_lower_app}
\big|\{\omega:S(\omega)>\varepsilon\}\big|
=\Omega\!\left(\varepsilon^{-\frac{1}{1+\alpha_{\min}}}\right).
\end{equation}
Combining \eqref{eq:widom_simple}--\eqref{eq:set_meas_lower_app} yields
\begin{equation}\label{eq:N_asymp_app}
N(\varepsilon)=\Theta\!\left(\varepsilon^{-\rho}\right),
\qquad \rho:=\frac{1}{1+\alpha_{\min}}.
\end{equation}

Finally, use the identity
\begin{equation}\label{eq:lambda_by_N}
\lambda_h=\sup\{\varepsilon>0:\ N(\varepsilon)\ge h\}.
\end{equation}
Since $N(\varepsilon)=\Theta(\varepsilon^{-\rho})$, there exist constants $0<c<C$ such that for all small $\varepsilon$,
$c\,\varepsilon^{-\rho}\le N(\varepsilon)\le C\,\varepsilon^{-\rho}$.
Setting $\varepsilon=(C/h)^{1/\rho}$ yields $N(\varepsilon)\le h$, hence $\lambda_h\le (C/h)^{1/\rho}$;
setting $\varepsilon=(c/h)^{1/\rho}$ yields $N(\varepsilon)\ge h$, hence $\lambda_h\ge (c/h)^{1/\rho}$.
Therefore $\lambda_h=\Theta(h^{-1/\rho})=\Theta(h^{-(1+\alpha_{\min})})$.
\end{proof}

\subsection{Information Gain and Regret Bounds for 1D ALAS Component}
\label{app:regret_proof}
This subsection proves Theorem~\ref{thm:regret}. The key input is Proposition~\ref{prop:eigen_decay_1d_app}, which transfers the $\alpha$-stable spectral tail of the 1D ALAS component to the Mercer decay $\lambda_h=\Theta(h^{-(1+\alpha)})$. The resulting information-gain and regret bounds follow from standard GP-UCB arguments ~\citep{srinivas2009gaussian,srinivas2012information}.

\paragraph{Eigen-decay assumption.}
For the 1D ALAS component $k_{\text{ALAS}}^{(1)}$, we have a single tail parameter, so $\alpha_{\min}=\alpha$. By Proposition~\ref{prop:eigen_decay_1d_app}, the Mercer eigenvalues satisfy
$\lambda_h=\Theta(h^{-\beta})$ with
\begin{equation}\label{eq:beta_def}
\beta := 1+\alpha,\qquad \alpha\in(0,2),
\end{equation}
so $\beta\in(1,3)$, here $\beta$ denotes the eigenvalue-decay exponent. In particular, there exists a constant $C>0$ such that
\begin{equation}\label{eq:eig_upper_app}
\lambda_h \le C h^{-\beta}\qquad \text{for all }h\ge 1.
\end{equation}

\paragraph{Information gain.}
A standard eigenvalue bound for GP mutual information \citep{srinivas2009gaussian} gives
\begin{equation}\label{eq:ig_eig_app}
\gamma_T
\ \le\ \frac12\sum_{h=1}^{T}\log\!\Big(1+\sigma^{-2}T\,\lambda_h\Big)
\ \le\ \frac12\sum_{h=1}^{T}\log\!\Big(1+cT\,h^{-\beta}\Big),
\end{equation}
where $c := \sigma^{-2}C$.
This follows from the standard operator-to-matrix eigenvalue comparison: for any $A\subset\mathcal{X}$ with $|A|=T$,
\begin{equation}
\log\det(I+\sigma^{-2}K_A)
\ \le\ \sum_{h=1}^T \log\!\big(1+\sigma^{-2}T\,\lambda_h\big),
\end{equation}
where $\{\lambda_h\}$ are the Mercer eigenvalues of $T_k$ on $L_2(\mathcal{X})$. Define
\begin{equation}\label{eq:GT_def_app}
G_T := \sum_{h=1}^{T}\log\!\Big(1+cT\,h^{-\beta}\Big).
\end{equation}
Then we have $\gamma_T \le \tfrac12 G_T$. We bound $G_T$ by splitting the sum at the index where $cT h^{-\beta}$ is of order one.
Let
\begin{equation}\label{eq:H_def_app}
H := \min\Big\{T,\ \Big\lceil (cT)^{1/\beta}\Big\rceil\Big\}.
\end{equation}
Then for $h\le H$ we have $cT h^{-\beta}\ge 1$, and for $h>H$ we have $cT h^{-\beta}\le 1$.

\textbf{Head ($h\le H$).}
For $h\le H$, $\log(1+x)\le \log(2x)$ gives
\begin{align}
\sum_{h=1}^{H}\log\!\Big(1+cT\,h^{-\beta}\Big)
&\le \sum_{h=1}^{H}\Big(\log(2cT)-\beta\log h\Big)\notag\\
&\le H\log(2cT)
\ =\ \tilde{\mathcal{O}}(H).
\label{eq:head_bound_app}
\end{align}

\textbf{Tail ($h>H$).}
For $h>H$, $\log(1+x)\le x$ gives
\begin{align}
\sum_{h=H+1}^{T}\log\!\Big(1+cT\,h^{-\beta}\Big)
&\le cT\sum_{h=H+1}^{T}h^{-\beta}
\ \le\ cT\int_{H}^{\infty}x^{-\beta}\,dx \notag\\
&=\ \frac{cT}{\beta-1}\,H^{1-\beta}
\ =\ \mathcal{O}\!\big(T^{1/\beta}\big).
\label{eq:tail_bound_app}
\end{align}

Combining \eqref{eq:head_bound_app} and \eqref{eq:tail_bound_app} with $H=\Theta(T^{1/\beta})$ yields $G_T=\tilde{\mathcal{O}}\!\big(T^{1/\beta}\big)$, hence
\begin{equation}\label{eq:ig_rate_app}
\gamma_T=\tilde{\mathcal{O}}\!\big(T^{1/\beta}\big)
=\tilde{\mathcal{O}}\!\Big(T^{\frac{1}{1+\alpha}}\Big),
\end{equation}
which proves the information-gain in Theorem~\ref{thm:regret}.

\paragraph{Regret.}
Under the standard GP-UCB analysis~\citep{srinivas2009gaussian}, the cumulative regret satisfies
\begin{equation}\label{eq:regret_from_ig_app}
R_T=\tilde{\mathcal{O}}\!\left(\sqrt{T\,\gamma_T}\right).
\end{equation}
Substituting \eqref{eq:ig_rate_app} gives
\begin{equation}\label{eq:regret_rate_app}
R_T=\tilde{\mathcal{O}}\!\left(
T^{\frac12+\frac{1}{2(1+\alpha)}}
\right),
\end{equation}
which completes the proof of Theorem~\ref{thm:regret}.

\subsection{Mercer Spectrum and Eigenvalue Decay of ALAS-Sep}
\label{app:additive_spectrum}

We justify the spectral decomposition used for the information-gain bound of ALAS-Sep in Section~\ref{sec:alask}.
Throughout, let $\mathcal{X}=\prod_{j=1}^d\mathcal{X}_j$ and $\mu=\bigotimes_{j=1}^d\mu_j$, where each $\mu_j$ is a probability measure on $\mathcal{X}_j$.

\paragraph{Centering.}
For each 1D component $k^{(j)}$ on $(\mathcal{X}_j,\mu_j)$, we work with its centered version so that
\begin{equation}\label{eq:center_cond}
\int_{\mathcal{X}_j} k^{(j)}(x_j,y_j)\,d\mu_j(y_j)=0\qquad \forall x_j\in\mathcal{X}_j.
\end{equation}
Equivalently, the associated operator $T_j$ maps the constant function to zero. Centering here only affects finitely many eigenvalues and does not change asymptotic eigenvalue decay rates.

\begin{proposition}[Spectrum of additive kernels]
\label{prop:additive_spectrum}
Let the ALAS-Sep kernel be
\begin{equation}\label{eq:add_kernel_app}
k_{\text{ALAS}}^{\text{sep}}(\mathbf{x},\mathbf{x}')
=\sum_{j=1}^d k^{(j)}(x_j,x_j'),
\end{equation}
where $k^{(j)}$ denotes the centered version of $k_{\text{ALAS}}^{(j)}$ satisfying~\eqref{eq:center_cond}.
Let $T$ be the Mercer operator of $k_{\text{ALAS}}^{\text{sep}}$ on $L_2(\mathcal{X},\mu)$, and $T_j$ be the Mercer operator of the centered $k_{\text{ALAS}}^{(j)}$ on $L_2(\mathcal{X}_j,\mu_j)$.
Then every nonzero eigenvalue of $T$ is an eigenvalue of some $T_j$, and the nonzero spectrum of $T$ is the multiset union
of the nonzero spectra of $\{T_j\}_{j=1}^d$.
\end{proposition}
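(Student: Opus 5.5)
The plan is to show that $T$ decomposes as an orthogonal direct sum of lifted copies of the $T_j$, and then read off the spectrum. For each $j$, define the lifting map $\iota_j: L_2(\mathcal{X}_j,\mu_j)\to L_2(\mathcal{X},\mu)$ by $(\iota_j g)(\mathbf{x})=g(x_j)$. Because $\mu=\bigotimes_j\mu_j$ and each $\mu_j$ is a probability measure, Fubini shows that $\iota_j$ is an isometry. Let $V_j:=\iota_j\big(L_2^0(\mathcal{X}_j,\mu_j)\big)$, where $L_2^0$ denotes the mean-zero functions.

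The first step is to compute the action of $T$. For $f\in L_2(\mathcal{X},\mu)$ we have
\begin{equation*}
(Tf)(\mathbf{x})=\sum_{j=1}^d\int_{\mathcal{X}_j}k^{(j)}(x_j,y_j)\,\bar f_j(y_j)\,d\mu_j(y_j),
\end{equation*}
where $\bar f_j(y_j):=\int f(\mathbf{y})\,d\mu_{-j}(\mathbf{y}_{-j})$ is the conditional expectation of $f$ onto coordinate $j$ (again by Fubini). Hence $T=\sum_j \iota_j T_j P_j$, where $P_j f=\bar f_j$. The centering condition \eqref{eq:center_cond} implies that $T_j$ annihilates constants. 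Since $T_j$ is self-adjoint, its range is orthogonal to constants, so $\operatorname{ran}(\iota_j T_j)\subseteq V_j$.

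The second step is the orthogonality of the subspaces. For $i\neq j$, $g\in L_2^0(\mu_i)$ and $h\in L_2^0(\mu_j)$, the product structure of $\mu$ gives $\langle\iota_i g,\iota_j h\rangle=\big(\int g\,d\mu_i\big)\big(\int h\,d\mu_j\big)=0$, so the $V_j$ are mutually orthogonal. Moreover, for $f=\iota_i g\in V_i$ with $i\neq j$, we get $P_j f=\int g\,d\mu_i$, which is the constant $0$. Likewise $P_j\iota_j g=g$. Consequently $T$ acts on $V_j$ as $\iota_j T_j\iota_j^{-1}$, i.e.\ unitarily equivalent to $T_j|_{L_2^0}$. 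On $W:=\big(\bigoplus_j V_j\big)^\perp$, $T$ vanishes: if $f\perp V_j$ for all $j$, then $P_j f$ is constant (it is orthogonal to $L_2^0(\mu_j)$), and $T_j$ kills constants.

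The final step reads off the spectrum. Combining the above, $T=\bigoplus_j(\iota_j T_j\iota_j^{-1})\oplus 0_W$. The nonzero eigenvalues of $T_j$ on all of $L_2(\mu_j)$ coincide with those of its restriction to $L_2^0$, because constants lie in its kernel and its range is contained in $L_2^0$. The nonzero spectrum of $T$, with multiplicities, is therefore the multiset union of the nonzero spectra of the $T_j$.

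The main obstacle is the middle step: correctly handling $P_j$ on functions that are not of single-coordinate form, and justifying that $T$ vanishes on $W$. I would address this via the conditional-expectation identity together with centering. Without centering, the constant function would be shared by all $V_j$ and the eigenvalues would mix, which is precisely why the centering assumption is imposed.
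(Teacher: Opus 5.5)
Your proposal is correct. It uses the same ingredients as the paper's proof: lifting $g\mapsto g(x_j)$, the marginal identity $Tf=\sum_j T_j f_j$, and centering to kill cross terms. It packages them differently, though.

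The paper argues in two loose steps. First, it checks directly that each lifted eigenpair $(\lambda_{j,m},\phi_{j,m}(x_j))$ is an eigenpair of $T$. For the converse, it only observes that $\psi=\lambda^{-1}T\psi$ must be a sum of one-dimensional functions, and then asserts that $\lambda$ ``comes from some $T_j$''. That leaves implicit how to extract a $T_j$-eigenfunction from such a sum: one has to split off the constants, show they sum to zero, and project with $P_j$. It also never addresses multiplicities, which the ``multiset union'' claim requires.

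Your block decomposition $L_2(\mathcal{X},\mu)=\bigoplus_j V_j\oplus W$ settles both points at once. The $V_j$ are mutually orthogonal, $T|_{V_j}$ is unitarily equivalent to $T_j|_{L_2^0}$, and $T|_W=0$. The nonzero eigenvalues of a block-diagonal compact self-adjoint operator, counted with multiplicity, are exactly those of its blocks. The paper's route is shorter, but yours is the one that fully justifies the converse and the multiplicity count.

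One step deserves an explicit line: $P_j=\iota_j^{*}$, i.e.\ $\langle f,\iota_j h\rangle_\mu=\langle P_j f,h\rangle_{\mu_j}$ by Fubini. This is what justifies that $f\perp V_j$ forces $P_j f$ to be constant, and hence that $T$ vanishes on $W$.
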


\begin{proof}
In this proof, we  let $k^{(j)}$ denote the centered version of $k_{\text{ALAS}}^{(j)}$, so that~\eqref{eq:center_cond} holds. Fix $j\in[d]$ and let $(\lambda_{j,m},\phi_{j,m})$ be an eigenpair of $T_j$ with $\lambda_{j,m}\neq 0$:
\begin{equation}\label{eq:Tj_eig_app}
\int_{\mathcal{X}_j} k^{(j)}(x_j,x_j')\,\phi_{j,m}(x_j')\,d\mu_j(x_j')
=\lambda_{j,m}\phi_{j,m}(x_j).
\end{equation}
Lift it to $\mathcal{X}$ by $\Psi_{j,m}(\mathbf{x}) := \phi_{j,m}(x_j)$.

\paragraph{Component eigenvalues are eigenvalues of $T$}
Let $k := k_{\text{ALAS}}^{\text{sep}}$. Using the additive form $k=\sum_{p=1}^d k^{(p)}$ and Fubini's theorem,
\begin{equation}\label{eq:T_on_lift_app}
(T\Psi_{j,m})(\mathbf{x})
=\sum_{p=1}^d \int_{\mathcal{X}} k^{(p)}(x_p,x_p')\,\phi_{j,m}(x_j')\,d\mu(\mathbf{x}').
\end{equation}
The $p=j$ term reduces to \eqref{eq:Tj_eig_app} and equals $\lambda_{j,m}\Psi_{j,m}(\mathbf{x})$.
For $p\neq j$, the integral separates into a $p$-part and a $j$-part. By centering $\int_{\mathcal{X}_p} k^{(p)}(x_p,x_p')\,d\mu_p(x_p')=0$, so all cross terms are zero.
Therefore $T\Psi_{j,m}=\lambda_{j,m}\Psi_{j,m}$.

\paragraph{$T$ has no other nonzero eigenvalues.}
For any $f\in L_2(\mathcal{X},\mu)$ define the marginal
\begin{equation}\label{eq:marginal_short_app}
f_j(x_j'):=\int_{\mathcal{X}_{-j}} f(\mathbf{x}')\,d\mu_{-j}(\mathbf{x}'_{-j}).
\end{equation}
Then additivity and Fubini give the decomposition
\begin{equation}\label{eq:T_decomp_app}
(Tf)(\mathbf{x})
=\sum_{j=1}^d \int_{\mathcal{X}_j} k^{(j)}(x_j,x_j')\, f_j(x_j')\,d\mu_j(x_j')
=\sum_{j=1}^d (T_j f_j)(x_j).
\end{equation}
So $Tf$ is always a sum of $d$ one-dimensional functions. Therefore any eigenfunction $\psi$ with eigenvalue $\lambda\neq 0$ satisfies $\psi=\lambda^{-1}T\psi$ and must also be a sum of 1D functions. Therefore, every nonzero eigenvalue of $T$ must come from some $T_j$.

Combining the two steps, we show that the nonzero spectrum of $T$ is exactly the multiset union of the nonzero spectra of $\{T_j\}_{j=1}^d$.
\end{proof}

\begin{corollary}[Eigenvalue decay of ALAS-Sep]
\label{cor:alas_add_decay}
Assume each 1D component kernel $k^{(j)}$ has Mercer eigenvalues
\begin{equation}\label{eq:1d_decay_each}
\lambda_{j,n}=\Theta(n^{-p_j}),\qquad p_j>1.
\end{equation}
Let $p_{\min}:=\min_{j\in[d]}p_j$, and let $\{\lambda_h\}_{h\ge 1}$ be the sorted nonzero eigenvalues of the additive kernel~\eqref{eq:add_kernel_app}. Then
\begin{equation}\label{eq:add_decay_rate}
\lambda_h=\Theta\!\big(h^{-p_{\min}}\big).
\end{equation}
In \text{ALAS-Sep}, Proposition~\ref{prop:eigen_decay_1d_app} implies $p_j = 1+\alpha_j$ with $\alpha_j\in(0,2)$.
\end{corollary}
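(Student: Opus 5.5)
By Proposition~\ref{prop:additive_spectrum}, the sorted nonzero spectrum $\{\lambda_h\}$ of the additive operator is the decreasing rearrangement of the multiset union $\bigcup_{j=1}^d\{\lambda_{j,n}\}_{n\ge1}$. The plan is to compare eigenvalue counting functions rather than work with the rearranged sequence directly. For $\varepsilon>0$ set $N_j(\varepsilon):=\#\{n:\lambda_{j,n}>\varepsilon\}$ and $N(\varepsilon):=\#\{h:\lambda_h>\varepsilon\}$. The multiset-union statement gives exactly
\begin{equation*}
N(\varepsilon)=\sum_{j=1}^d N_j(\varepsilon).
\end{equation*}
After that, it only remains to translate the result back to $\lambda_h$ with the inversion identity \eqref{eq:lambda_by_N}, as in the proof of Proposition~\ref{prop:eigen_decay_1d_app}.

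\textbf{Step 1: per-component counting functions.} From $a_j n^{-p_j}\le\lambda_{j,n}\le b_j n^{-p_j}$, valid for all $n\ge n_j$, I will derive
\begin{equation*}
N_j(\varepsilon)=\Theta\bigl(\varepsilon^{-1/p_j}\bigr), \qquad \varepsilon\downarrow0.
\end{equation*}
This is the reverse of the inversion argument already carried out for Proposition~\ref{prop:eigen_decay_1d_app}. For the upper bound, every $n$ with $\lambda_{j,n}>\varepsilon$ satisfies either $n<n_j$ or $n<(b_j/\varepsilon)^{1/p_j}$. For the lower bound, every $n$ with $n_j\le n\le (a_j/\varepsilon)^{1/p_j}$ is counted. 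The finitely many exceptional indices below $n_j$, together with the finitely many eigenvalues that centering changes, only add $O(1)$.

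\textbf{Step 2: summation.} Since $d$ is fixed and $\varepsilon^{-1/p_j}\le\varepsilon^{-1/p_{\min}}$ for $\varepsilon<1$, summing the bounds from Step 1 gives
\begin{equation*}
N(\varepsilon)\le C\,\varepsilon^{-1/p_{\min}}.
\end{equation*}
For the matching lower bound, I keep only an index $j^\star$ attaining $p_{j^\star}=p_{\min}$, which gives
\begin{equation*}
N(\varepsilon)\ge N_{j^\star}(\varepsilon)\ge c\,\varepsilon^{-1/p_{\min}}.
\end{equation*}
This is the same heaviest-tail dominance mechanism used in Lemma~\ref{lemma:tail_dominance}, now applied at the level of eigenvalues instead of spectral densities.

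\textbf{Step 3: inversion.} With $\rho:=1/p_{\min}$ we have $c\,\varepsilon^{-\rho}\le N(\varepsilon)\le C\,\varepsilon^{-\rho}$ for small $\varepsilon$. Applying \eqref{eq:lambda_by_N} with $\varepsilon=(C/h)^{1/\rho}$ and then with $\varepsilon=(c/h)^{1/\rho}$ yields
\begin{equation*}
\lambda_h=\Theta\bigl(h^{-p_{\min}}\bigr).
\end{equation*}
The last sentence of the corollary then follows by substituting $p_j=1+\alpha_j$ from Proposition~\ref{prop:eigen_decay_1d_app}.

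\textbf{Main obstacle.} The difficulty is not the counting but the precise content of Proposition~\ref{prop:additive_spectrum}. The identity $N=\sum_j N_j$ needs the union to hold with multiplicities. In particular, if two components share an eigenvalue, their lifted eigenfunctions on different coordinates are orthogonal under the product measure, because centering forces each of them to have zero mean. I will check this orthogonality explicitly so that multiplicities add correctly. I will also check that centering preserves the $\Theta(n^{-p_j})$ rate; it is a rank-at-most-two perturbation, so Weyl interlacing shifts indices by at most a bounded amount, and this is absorbed into the constants.
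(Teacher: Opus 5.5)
Your proposal is correct and follows the paper's own argument. You define $N_j$ and $N$, use Proposition~\ref{prop:additive_spectrum} to get $N(\varepsilon)=\sum_j N_j(\varepsilon)$, sandwich $N$ between the component $j^\star$ with $p_{j^\star}=p_{\min}$ and the full sum, and invert via \eqref{eq:lambda_by_N}. Your extra checks go beyond what the paper states: the orthogonality of lifted eigenfunctions (so multiplicities add), and the interlacing argument that centering shifts indices by a bounded amount, whereas the paper only asserts that centering affects finitely many eigenvalues. One harmless constant fix, missing in the paper too: in the inversion take e.g.\ $\varepsilon=(2C/h)^{1/\rho}$, so that $N(\varepsilon)<h$ strictly and hence $\lambda_h\le\varepsilon$.
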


\begin{proof}
For $\varepsilon>0$, define the counting function
\begin{equation}\label{counting_alas}
N_j(\varepsilon):=\#\{n:\lambda_{j,n}>\varepsilon\}.
\end{equation}
From~\eqref{eq:1d_decay_each}, there exist constants $0<c_j\le C_j$ and $\varepsilon_0\in(0,1)$ such that
for all $\varepsilon\in(0,\varepsilon_0)$,
\begin{equation}\label{eq:Nj_bounds}
c_j\,\varepsilon^{-1/p_j}\ \le\ N_j(\varepsilon)\ \le\ C_j\,\varepsilon^{-1/p_j}.
\end{equation}

By Proposition~\ref{prop:additive_spectrum}, the nonzero spectrum of the sum kernel is the multiset union of the component spectra. Therefore the total counting function
\begin{equation}\label{total_counting}
N(\varepsilon):=\#\{h:\lambda_h>\varepsilon\}
\end{equation}
satisfies $N(\varepsilon)=\sum_{j=1}^d N_j(\varepsilon)$.
Let $p_{\min}=\min_j p_j$ and choose $j^\star$ with $p_{j^\star}=p_{\min}$.
For $\varepsilon\in(0,1)$ and $p_j\ge p_{\min}$, $\varepsilon^{-1/p_j}\le \varepsilon^{-1/p_{\min}}$.
Therefore, for all sufficiently small $\varepsilon$,
\begin{equation}\label{eq:N_bounds}
c_{j^\star}\,\varepsilon^{-1/p_{\min}}
\ \le\ N(\varepsilon)
\ \le\ \Big(\sum_{j=1}^d C_j\Big)\varepsilon^{-1/p_{\min}},
\end{equation}
so $N(\varepsilon)=\Theta(\varepsilon^{-1/p_{\min}})$. Combining $\lambda_h=\sup\{\varepsilon>0:\ N(\varepsilon)\ge h\}$ and~\eqref{eq:N_bounds},
there exist constants $0<a\le A$ such that for all large $h$,
\begin{equation}
a\,h^{-p_{\min}}\ \le\ \lambda_h\ \le\ A\,h^{-p_{\min}}.
\end{equation}
\end{proof}

\subsection{Information Gain and Regret for ALAS-Sep}
\label{app:alas_add_ig}

\begin{corollary}[Information gain of ALAS-Sep]
\label{cor:alas_add_ig}
Let $k_{\text{ALAS}}^{\text{sep}}=\sum_{j=1}^d k^{(j)}$ be the additive kernel in~\eqref{eq:add_kernel_app},
and assume each component has eigenvalue decay
$\lambda_{j,n}=\Theta(n^{-(1+\alpha_j)})$ with $\alpha_j\in(0,2)$.
Let $\alpha_{\min}=\min_{j\in[d]}\alpha_j$.
Then the maximum information gain satisfies
\begin{equation}
\gamma_T\!\left(k_{\text{ALAS}}^{\text{sep}}\right)
=\tilde{\mathcal O}\!\left(d\cdot T^{\frac{1}{1+\alpha_{\min}}}\right).
\end{equation}
Consequently, under standard GP-UCB analysis,
\begin{equation}
R_T=\tilde{\mathcal O}\!\left(\sqrt{T\,\gamma_T\!\left(k_{\text{ALAS}}^{\text{sep}}\right)}\right)
=\tilde{\mathcal O}\!\left(\sqrt{d}\;T^{\frac12+\frac{1}{2(1+\alpha_{\min})}}\right).
\end{equation}
\end{corollary}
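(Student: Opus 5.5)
Two routes are available, and I would run both because they check each other. The first uses Corollary~\ref{cor:alas_add_decay} directly: the sorted nonzero eigenvalues of the centered additive kernel satisfy $\lambda_h=\Theta(h^{-(1+\alpha_{\min})})$. Feeding this into the spectral bound \eqref{eq:gammaT-eig} reproduces the head/tail computation of Appendix~\ref{app:regret_proof} with $\beta=1+\alpha_{\min}$. The cutoff is $H=\lceil (cT)^{1/\beta}\rceil$, the head is bounded by $H\log(2cT)$, and the tail by $\frac{cT}{\beta-1}H^{1-\beta}$. Together these give $\gamma_T=\tilde{\mathcal O}(T^{1/(1+\alpha_{\min})})$. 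The constant $c$ here absorbs $\sum_j C_j$ from \eqref{eq:N_bounds}, so it scales at most linearly in $d$. Tracking this dependence gives the stated $d\cdot T^{1/(1+\alpha_{\min})}$, and in fact something slightly better, since $d$ enters only through $c^{1/\beta}$.

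The second route is more transparent about $d$. I would invoke Lemma~\ref{lem:add_ig}, which gives $\gamma_T(k^{\text{sep}}_{\text{ALAS}})\le\sum_j\gamma_T(k^{(j)})$, and then apply Theorem~\ref{thm:regret} to each one-dimensional factor. Each factor is a function of $x_j$ only, so its information gain over $A\subset\mathcal{X}$ equals that of the 1D kernel on the projected points in $\mathcal{X}_j$. This yields $\gamma_T(k^{(j)})=\tilde{\mathcal O}(T^{1/(1+\alpha_j)})$. Since $T\ge1$ and $\alpha_j\ge\alpha_{\min}$, each term is at most $\tilde{\mathcal O}(T^{1/(1+\alpha_{\min})})$, and summing over $d$ terms gives the bound. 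To make the constants uniform I would take the maximum over $j$ of the hidden constants $C_j$ and $\sigma^{-2}$ factors. There are finitely many of them, so this is harmless.

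One subtlety affects both routes: the centered kernel versus the original $k^{(j)}_{\text{ALAS}}$. Centering is a rank-at-most-two perturbation of the 1D operator. By Weyl interlacing, the eigenvalue decay rate is unchanged, and the $\log\det$ changes by at most an additive $O(\log T)$ term. That term is absorbed in the $\tilde{\mathcal O}$.

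For the regret, I would invoke the standard GP-UCB reduction \eqref{eq:regret_from_ig_app}, $R_T=\tilde{\mathcal O}(\sqrt{T\gamma_T})$, with $\beta_t$ chosen as in \citep{srinivas2009gaussian} for compact $\mathcal{X}\subset\mathbb{R}^d$. The confidence parameter contributes only logarithmic factors, though it carries a $d\log$ term from the discretization argument, which is hidden in $\tilde{\mathcal O}$. Substituting the information-gain bound gives $\sqrt{d}\,T^{\frac12+\frac{1}{2(1+\alpha_{\min})}}$.

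The main obstacle I expect is justifying Lemma~\ref{lem:add_ig} itself, if it is not taken as given. For that I would write $K_A=\sum_j K_A^{(j)}$. The standard subadditivity $\log\det(I+M_1+M_2)\le\log\det(I+M_1)+\log\det(I+M_2)$ for PSD matrices then applies. It follows from $\det(I+M_1+M_2)\le\det(I+M_1)\det(I+M_2)$, which one gets by factoring out $(I+M_1)^{1/2}$ and using $(I+M_1)^{-1/2}M_2(I+M_1)^{-1/2}\preceq M_2$. Maximizing over $A$ separately in each term then gives the sum bound.
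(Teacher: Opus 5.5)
Your Route~2 is a genuinely different argument from the paper's appendix proof, and it coincides with the sketch given in the main text around \eqref{eq:alas_ig}. The appendix proof stays at the operator level. Proposition~\ref{prop:additive_spectrum} identifies the nonzero Mercer spectrum of the additive operator with the multiset union of the component spectra. The global top-$T$ eigenvalues are then shown to lie in $\bigcup_j\{\lambda_{j,1},\dots,\lambda_{j,T}\}$, and the 1D head/tail bound is applied to each component separately. You work instead with the $T\times T$ matrices $K_A=\sum_j K_A^{(j)}$ and subadditivity of $\log\det(I+\cdot)$. This buys generality: you need no centering, no product measure, and no spectral theory of the $d$-dimensional operator, since the eigenvalue-to-$\gamma_T$ comparison is only used in 1D. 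It also actually proves Lemma~\ref{lem:add_ig}, which the paper states but never proves directly. The paper's route, in exchange, yields the spectral decomposition that also underlies Corollary~\ref{cor:alas_add_decay}. Both the paper's proof and your Route~2 keep the sharper intermediate bound $\tilde{\mathcal O}(\sum_j T^{1/(1+\alpha_j)})$. Your Route~1, which feeds the global sorted decay into \eqref{eq:gammaT-eig}, discards it.

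Three points need fixing.

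First, the step $(I+M_1)^{-1/2}M_2(I+M_1)^{-1/2}\preceq M_2$ is false for non-commuting matrices. Take $M_1=\mathrm{diag}(0,3)$ and $M_2=vv^\top$ with $v=(1,1)^\top$. The left side is $uu^\top$ with $u=(1,\tfrac12)^\top$, and $z=(1,-1)^\top$ gives $z^\top uu^\top z=\tfrac14>0=z^\top vv^\top z$. The determinant inequality itself survives. By Sylvester's identity, $\det(I+(I+M_1)^{-1}M_2)=\det(I+M_2^{1/2}(I+M_1)^{-1}M_2^{1/2})$. Since $(I+M_1)^{-1}\preceq I$, you have $M_2^{1/2}(I+M_1)^{-1}M_2^{1/2}\preceq M_2$, and monotonicity of $\det(I+\cdot)$ finishes the argument.

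Second, in Route~1 the constant is not linear in $d$. Inverting $N(\varepsilon)\le(\sum_j C_j)\varepsilon^{-1/p_{\min}}$ gives $\lambda_h\le(\sum_j C_j)^{p_{\min}}h^{-p_{\min}}$, so $c\propto d^{\beta}$ and $(cT)^{1/\beta}\propto d\,T^{1/\beta}$. The bound is therefore exactly linear in $d$, not ``slightly better''. Indeed, with $d$ identical components every eigenvalue appears $d$ times in the sorted list, and the eigenvalue sum really is of order $d\,T^{1/\beta}$ (as long as $d\,T^{1/\beta}\lesssim T$).

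Third, keep the $d\log T$ term in the compact-domain confidence parameter $\beta_T$ of \citet{srinivas2009gaussian} explicit, and the regret becomes $\tilde{\mathcal O}(d\,T^{\frac12+\frac{1}{2(1+\alpha_{\min})}})$. The $\sqrt d$ in the statement relies on treating $\beta_T$ as polylogarithmic, a convention the paper also adopts silently. State it as an assumption rather than presenting $\sqrt d$ as a tracked dependence.
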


\begin{proof}
By Proposition~\ref{prop:additive_spectrum}, the nonzero eigenvalues of the additive Mercer operator are the multiset union of $\{\lambda_{j,n}\}_{j,n}$.
Let $\{\lambda_h\}_{h\ge 1}$ denote these eigenvalues sorted in non-increasing order.

A standard eigenvalue bound for GP information yields
\begin{equation}\label{eq:ig_add_eig_bound}
\gamma_T\!\left(k_{\text{ALAS}}^{\text{sep}}\right)
\ \le\ \frac12\sum_{h=1}^{T} \log\!\Big(1+\sigma^{-2}T\,\lambda_h\Big).
\end{equation}

\paragraph{Key containment.}
For each component $j$, any eigenvalue $\lambda_{j,n}$ with $n>T$ cannot be among the global top-$T$ eigenvalues,
because the same component already contains $T$ eigenvalues $\{\lambda_{j,1},\ldots,\lambda_{j,T}\}$ that are at least as large.
Therefore, the global top-$T$ eigenvalues are contained in the multiset
$\bigcup_{j=1}^d \{\lambda_{j,1},\ldots,\lambda_{j,T}\}$.

Since $\log(1+\sigma^{-2}T\lambda)$ is nonnegative and increasing in $\lambda$, we can upper bound the sum
over the top-$T$ eigenvalues by the sum over this superset:
\begin{equation}\label{eq:ig_add_split_by_j}
\sum_{h=1}^{T} \log\!\Big(1+\sigma^{-2}T\,\lambda_h\Big)
\ \le\ \sum_{j=1}^{d}\sum_{n=1}^{T}\log\!\Big(1+\sigma^{-2}T\,\lambda_{j,n}\Big).
\end{equation}

Fix a component $j$, use $\lambda_{j,n}\le C_j n^{-(1+\alpha_j)}$ for some $C_j>0$ and apply the same head and tail argument as in
Subsection~\ref{app:regret_proof}, we have
\begin{equation}\label{eq:ig_each_component_rate}
\sum_{n=1}^{T}\log\!\Big(1+\sigma^{-2}T\,\lambda_{j,n}\Big)
=\tilde{\mathcal O}\!\Big(T^{\frac{1}{1+\alpha_j}}\Big).
\end{equation}
Combining \eqref{eq:ig_add_eig_bound}--\eqref{eq:ig_each_component_rate} gives
\begin{equation}\label{eq:ig_add_sum_rate}
\gamma_T(k_{\text{ALAS}}^{\text{sep}})
=\tilde{\mathcal O}\!\left(\sum_{j=1}^{d} T^{\frac{1}{1+\alpha_j}}\right).
\end{equation}
Finally, since $\alpha_j\ge \alpha_{\min}$,
\begin{equation}
\gamma_T(k_{\text{ALAS}}^{\text{sep}})
=\tilde{\mathcal O}\!\left(d\cdot T^{\frac{1}{1+\alpha_{\min}}}\right).
\end{equation}

The regret bound follows from the standard GP-UCB reduction
$R_T=\tilde{\mathcal O}(\sqrt{T\,\gamma_T})$.
\end{proof}

\section{Experiments}
\label{app:experiments}

In this section, we provide a comprehensive description of our experimental setup, implementation choices, baseline methods, and additional results to ensure reproducibility.

\subsection{Implementation Details}
\label{app:impl_details}

\paragraph{Code availability.}
The code release is available at \url{https://github.com/FrankHuang24/ALAS-BO}.
\paragraph{GP Modeling Framework.}
All Gaussian process models are implemented in GPyTorch, with BoTorch providing the Bayesian Optimization loop.
For numerical stability in exact MLL optimization and GP linear algebra (e.g., Cholesky decomposition),
we use double precision for GP training and kernel computations.
Observed outcomes are standardized to zero mean and unit variance before fitting.
For stationary baselines (RBF, Mat\'ern-$5/2$, RQ), we use the ARD variants to ensure a fair comparison with the dimension-wise parameterization used by ALAS.

\paragraph{ALAS kernel construction.}
Algorithm~\ref{alg:alas_construction} summarizes how we instantiate the two kernel forms, ALAS and ALAS-Sep, from the same $\alpha$-stable spectral component.
The FFT-based steps are only used to obtain a reasonable initialization from the initial design set $\mathcal{D}_{n_0}$ via an empirical spectrum heuristic.
During Bayesian optimization, all kernel hyperparameters (stability, spectral, scale and noise parameters) are subsequently refined by maximizing the exact marginal log-likelihood under the same training protocol as the baselines.

\begin{algorithm}[h]
   \caption{Construction of ALAS Kernels}
   \label{alg:alas_construction}
\begin{algorithmic}[1]
   \STATE {\bfseries Input:} Dimension $d$, Mode $\in \{\text{ALAS}, \text{ALAS-Sep}\}$
   \STATE {\bfseries Output:} Kernel function $K(\mathbf{x}, \mathbf{x}')$
   
   \IF{Mode is ALAS}
       \STATE \textbf{Model cross-dimensional correlations (ARD)}
       \STATE Initialize spectral params $\boldsymbol{\gamma}, \boldsymbol{\delta} \in \mathbb{R}^d$ via FFT on initial observations $\mathcal D_{n_0}$
       \STATE Initialize shared stability index $\alpha \in (0, 2]$
       \STATE \textbf{Return} $K(\mathbf{x},\mathbf{x}')=w\exp\!\left(-\sum_{j=1}^d (2\pi\delta_j|\tau_j|)^{\alpha}\right)\cos\!\left(2\pi\,\boldsymbol{\gamma}^{\top}\tau\right)$\\
   \ELSE
       \STATE \textbf{Scalable structure via decomposition}
       \FOR{dimension $j = 1$ to $d$}
           \STATE Initialize 1D spectral params $\gamma_{j}, \delta_{j} \in \mathbb{R}$ via FFT on dim $j$
           \STATE Initialize dimension-specific stability index $\alpha_j \in (0, 2]$
           \STATE Construct component $K_j(x_j, x'_j)$ using $\alpha_j, \gamma_j, \delta_j$
       \ENDFOR
       \STATE \textbf{Return} $K(\mathbf{x}, \mathbf{x}') = \sum_{j=1}^d K_j(x_j, x'_j)$
   \ENDIF
\end{algorithmic}
\end{algorithm}

\textbf{Empirical-Spectrum Initialization.}
Optimizing spectral kernels is non-convex and sensitive to initialization.
We use an empirical-spectrum heuristic inspired by \citet{wilson2013gaussian} to initialize the modulation and scale parameters.
Specifically, we estimate dominant spectral frequencies from the initial observations and use them to initialize the modulation parameters
($\gamma$ for \text{ALAS}, and $\{\gamma_j\}_{j=1}^d$ for \text{ALAS-Sep}),
together with the corresponding spectral scales ($\{\delta_j\}_{j=1}^d$).
We initialize the stability index close to the Gaussian boundary ($\alpha = 2$ for \text{ALAS}, and $\alpha_j = 2$ for \text{ALAS-Sep}),
so that marginal likelihood moves it toward rougher regimes only when supported by data.

\textbf{Hyperparameter Optimization.}
All kernel hyperparameters (kernel weights, modulation, spectral scale, and stability parameter) as well as the observation noise are learned by maximizing the exact marginal log-likelihood (MLL). We enforce positivity for scale and noise parameters, and keep $\alpha\in(0,2]$ using a smooth bounded transform. We optimize the MLL with L-BFGS using the same training protocol across ALAS, ALAS-Sep, and all baselines~\citep{zhu1997algorithm}.

\textbf{Acquisition Function Optimization.} 
We use Expected Improvement (EI), Probability of Improvement (PI), and Upper Confidence Bound (UCB) as acquisition functions. For UCB, we fixed the exploration parameter $\beta=0.2$ across all tasks and methods so that acquisition settings are identical. At each BO iteration, we optimize the acquisition function via multi-start gradient-based optimization: we draw $N_{\text{raw}}=512$ raw samples within the box constraints, select $N_{\text{start}}=10$ starting points, and run local optimization from each start. The best solution among the restarts is returned as the next candidate.

\textbf{Performance Metric.} 
To evaluate optimization performance consistently, we report the Log Optimality Gap at each BO iteration $t$.
Let $f_t^{\mathrm{best}}$ be the best value observed up to iteration $t$, and $f_{\mathrm{opt}}$ be the global optimum for synthetic benchmarks. We define:
\begin{equation}
    \text{LogGap}_t = \log_{e}\left(|f_t^{\mathrm{best}} - f_{\text{opt}}|\right)
\end{equation}
For synthetic functions, $f_{\text{opt}}$ is known analytically. For the Robot Pushing task, the objective is the distance to a target, so $f_{\text{opt}} = 0$. For Portfolio Optimization, we set $f_{\text{opt}}$ to an empirical upper bound derived from extensive offline sampling of the surrogate landscape. All experiments were repeated over $10$ independent random seeds.

\textbf{Computational Resources.} 
All experiments were conducted on a Linux server with ARM64 CPUs (Huawei Kunpeng 920; 2$\times$64 cores with SMT, 256 logical CPUs; max frequency 2.9\,GHz).
All methods use exact GP inference, so they share the same cubic scaling in the number of observations.
The additional cost of ALAS comes from optimizing a small number of extra kernel hyperparameters.

\subsection{Baseline Kernels}
\label{app:baselines}

We compare ALAS against commonly used stationary kernels and representative spectral baselines.
All baselines are trained as exact GPs by maximizing the marginal log-likelihood, and share the same
BO loop, acquisition optimization, and initialization protocol as ALAS.
\paragraph{Stationary kernels.}
We include RBF, Mat\'ern-$5/2$, and Rational Quadratic (RQ) kernels as standard baselines. We use Mat\'ern-$5/2$ as the representative Mat\'ern baseline in the benchmark figures to keep the multi-panel plots readable, and use ARD lengthscales for all stationary baselines.
For the 1D Weierstrass diagnostic in Appendix~\ref{app:additional_results}, we additionally report Mat\'ern-$3/2$ and Mat\'ern-$1/2$, with Mat\'ern-$1/2$ serving as a strong rough-kernel reference.

\paragraph{Spectral kernels.}
We include the Sinc kernel and the Spectral Delta Kernel (SDK) using their standard parameterizations~\citep{tobar2019band,vargas2021gaussian}.
Their hyperparameters are learned by MLL under the same optimization and numerical settings as ALAS.

\textbf{Remark GSM.} We do not include Gaussian Spectral Mixtures to keep structural complexity comparable: GSM is commonly instantiated with multiple Gaussian components, whereas our main ALAS models use a single $\alpha$-stable component (per dimension for the separable variant) and learn the spectral tail decay directly through $\alpha$. Extending ALAS to multi-component mixtures is a natural direction for future work.

\subsection{Test Functions}
\label{app:test_functions}

We evaluate ALAS on a set of standard BO benchmarks covering diverse input dimensions and function landscapes. The set includes smooth low-dimensional functions and more challenging multi-dimensional objectives with sharp local variation. Table~\ref{tab:benchmarks} summarizes the test functions and their settings. For minimization benchmarks, we maximize $-f$ so that all BO loops use the same convention.

\textbf{Lower-dimensional benchmarks.}
We first evaluate on lower-dimensional problems to emphasize kernel modeling behavior under limited data.
This set includes smooth functions such as Hartmann ($d{=}3,6$) and a low-dimensional Exponential test ($d{=}5$), which serve as sanity checks that ALAS can recover near-Gaussian behavior when appropriate.
We also include irregular objectives such as Weierstrass ($d{=}3$), which is continuous but nowhere differentiable, to stress settings where learning heavier spectral tails can be beneficial.

\textbf{Larger-input benchmarks.}
We further consider problems with larger input dimension to explore how the additive construction behaves.
This set includes oscillatory or higher dimensional functions such as Rastrigin ($d{=}30$) and rugged landscapes such as Levy ($d{=}20$).
We also include interaction-heavy objectives such as Rosenbrock ($d{=}10$) to examine cases where coupling across variables is important.

\textbf{Real-world problems.}
\begin{itemize}
    \item \textbf{Robot pushing ($d{=}4$).}
    We evaluate on the Robot Pushing task, a standard real-world BO benchmark.
    The objective measures the final distance to a target after executing a push, given a 4-dimensional action parameterization. Since the goal is to reach a specific coordinate, the theoretical minimum distance is zero ~\citep{kaelbling2017learning}.

    \item \textbf{Portfolio optimization ($d{=}5$).}
    We evaluate on a portfolio optimization task where the goal is to maximize a return-related objective by tuning a small number of decision parameters~\citep{cakmak2020bayesian}.
    Following prior work, we optimize a fixed surrogate objective constructed from a pre-collected set of simulator evaluations~\citep{owen1998scrambling}.
    We use the maximum value found in this offline set as the proxy for the global optimum.
\end{itemize}

\begin{table}
  \caption{Summary of the benchmark problems. For synthetic functions, detailed definitions follow ~\citep{surjanovic2013vlsim}.}
  \label{tab:benchmarks}
  \begin{center}
    \begin{small}
      \begin{sc}
        \begin{tabular}{lcccl}
          \toprule
          Task & Goal & Dimension & Iterations & Input domain \\
          \midrule
          Hartmann        & min & 3  & 30  & $x\in[0,1]^3$ \\
          Weierstrass     & min & 3  & 30  & $x\in[-5,5]^3$ \\
          Exponential     & min & 5  & 60  & $x\in[-5.12,5.12]^5$ \\
          Hartmann        & min & 6  & 80  & $x\in[0,1]^6$ \\
          Rosenbrock     & min & 10 & 150 & $x\in[-2.048,2.048]^{10}$ \\
          Levy           & min & 20 & 200 & $x\in[-5,5]^{20}$ \\
          Rastrigin      & min & 30 & 200 & $x\in[-5.12,5.12]^{30}$ \\
          \midrule
          Robot pushing     & min & 4  & 150 &
          \begin{tabular}[c]{@{}l@{}}
          $r_x, r_y \in [-5, 5]$, \\
          $t \in [1, 30]$, $\theta \in [0, 2\pi]$
          \end{tabular} \\
          \midrule
          Portfolio         & max & 5  & 200 &
          \begin{tabular}[c]{@{}l@{}}
          $\Omega \in [0.1, 1000]$, $\alpha \in [5.5, 8.0]$,\\
            $C_{\text{hold}} \in [0.1, 100]$, $C_{\text{borrow}} \in [10^{-4}, 10^{-3}]$,\\
            Spread $\in [10^{-4}, 10^{-2}]$
          \end{tabular} \\
          \bottomrule
        \end{tabular}
      \end{sc}
    \end{small}
  \end{center}
  \vskip -0.1in
\end{table}
\subsection{Additional Experimental Results}
\label{app:additional_results}
\paragraph{Predictive-fit robustness.}
Table~\ref{tab:predictive_fit_stats} reports predictive-fit comparisons on three representative 1D tasks: Weierstrass as a rough case, Branin as a smooth case, and Rastrigin as an oscillatory case.
We use the 1D versions to isolate surrogate fit from high-dimensional search effects.
These diagnostics complement the BO optimization curves by evaluating the fitted surrogate directly.
Across these settings, ALAS remains competitive with standard stationary kernels while adapting its tail parameter $\alpha$ from data.

\begin{table*}[t]
  \caption{Predictive fit on representative 1D tasks over $N{=}10$ seeds. We report test RMSE and average predictive log-likelihood (PLL).}
  \label{tab:predictive_fit_stats}
  \begin{center}
    \begin{small}
      \begin{tabular}{lcccccc}
        \toprule
        & \multicolumn{2}{c}{Weierstrass} & \multicolumn{2}{c}{Branin} & \multicolumn{2}{c}{Rastrigin} \\
        \cmidrule(lr){2-3} \cmidrule(lr){4-5} \cmidrule(lr){6-7}
        Kernel & RMSE $\downarrow$ & PLL $\uparrow$ & RMSE $\downarrow$ & PLL $\uparrow$ & RMSE $\downarrow$ & PLL $\uparrow$ \\
        \midrule
        RBF            & 0.31 $\pm$ 0.11 & -2.86 $\pm$ 6.07 & 0.27 $\pm$ 0.13 & 0.06 $\pm$ 0.07 & 0.02 $\pm$ 0.01 & 3.18 $\pm$ 0.11 \\
        Mat\'ern-$1/2$ & \textbf{0.26 $\pm$ 0.10} & \textbf{0.21 $\pm$ 0.33} & 5.25 $\pm$ 2.29 & -3.18 $\pm$ 0.05 & 0.79 $\pm$ 0.17 & -1.35 $\pm$ 0.02 \\
        Mat\'ern-$3/2$ & 0.27 $\pm$ 0.11 & -0.11 $\pm$ 0.33 & 1.25 $\pm$ 0.64 & -1.11 $\pm$ 0.08 & 0.11 $\pm$ 0.05 & 0.43 $\pm$ 0.05 \\
        Mat\'ern-$5/2$ & 0.29 $\pm$ 0.10 & -0.48 $\pm$ 0.65 & 0.46 $\pm$ 0.23 & -0.25 $\pm$ 0.08 & 0.11 $\pm$ 0.04 & 1.60 $\pm$ 0.07 \\
        ALAS           & \textbf{0.26 $\pm$ 0.09} & 0.01 $\pm$ 0.63 & \textbf{0.27 $\pm$ 0.10} & \textbf{0.18 $\pm$ 1.12} & \textbf{0.01 $\pm$ 0.00} & \textbf{4.33 $\pm$ 0.34} \\
        \bottomrule
      \end{tabular}
    \end{small}
  \end{center}
  \vskip -0.1in
\end{table*}

\paragraph{Learned $\alpha$ and spectral decay.}
We generate 1D functions with Fourier coefficients decaying as $k^{-\beta}$ and fit ALAS to each target.
Figure~\ref{fig:alpha_fourier_calibration} illustrates representative cases: as $\beta$ increases, the target becomes smoother and the learned $\hat{\alpha}$ moves toward the Gaussian boundary.

\begin{figure}[htbp]
\centering
\includegraphics[width=1\linewidth]{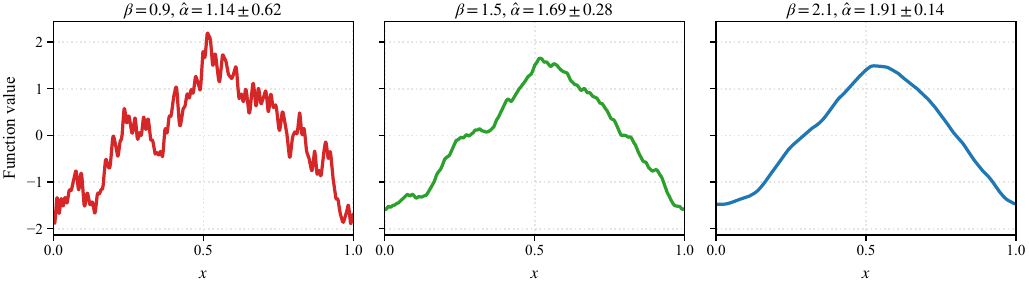}
\caption{\textbf{Learned $\alpha$ tracks spectral decay.}
Representative 1D functions with Fourier coefficients decaying as $k^{-\beta}$. Larger $\beta$ yields smoother targets, and the learned $\hat{\alpha}$ moves toward the Gaussian boundary $\alpha=2$.}
\label{fig:alpha_fourier_calibration}
\end{figure}

\paragraph{Envelope-only ablation.}
We compare ALAS with powered-exponential-$\alpha$ (PE-$\alpha$), which learns $\alpha$ with $\gamma=0$.
On a 1D Rastrigin, both models learn $\alpha=2$, but ALAS learns nonzero $\gamma$ and better captures oscillation.
This isolates the benefit of learning the modulation frequency beyond the powered-exponential envelope.

\begin{figure}[htbp]
\centering
\includegraphics[width=1\linewidth]{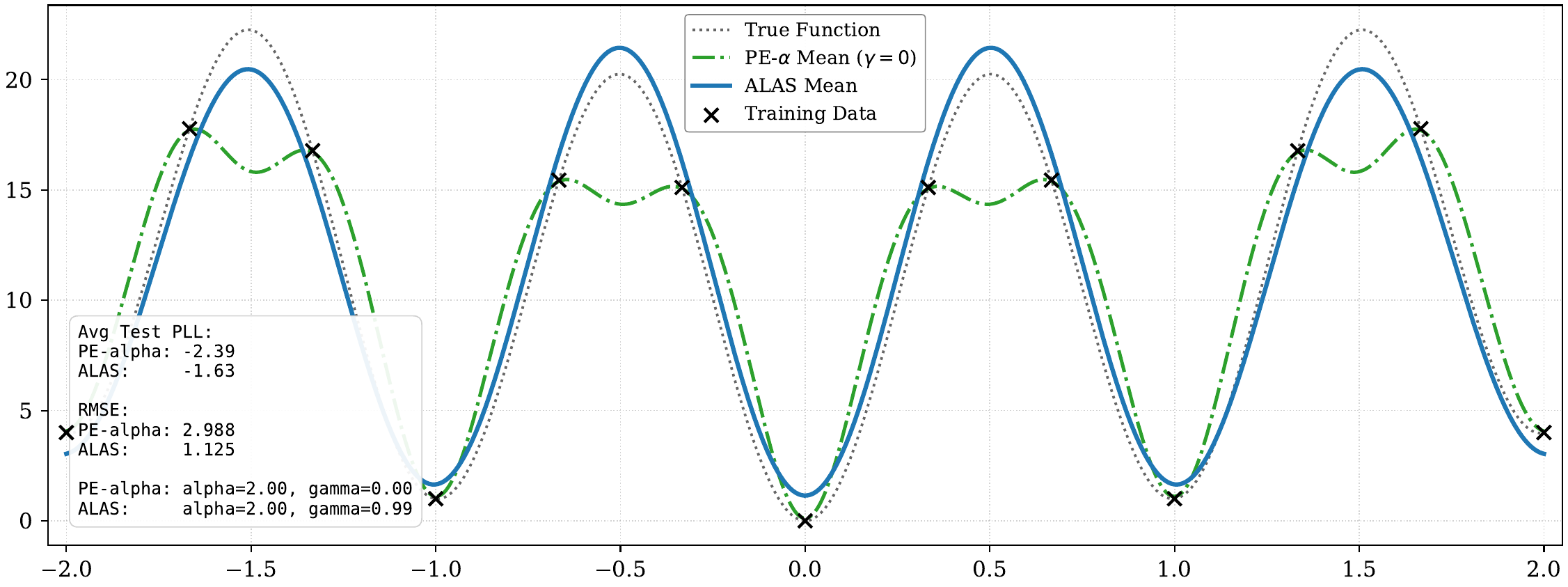}
\caption{\textbf{Envelope-only ablation on 1D Rastrigin.}
Powered-exponential-$\alpha$ fixes $\gamma=0$, while ALAS also learns the modulation frequency $\gamma$.
Both models learn $\alpha=2$, but ALAS better captures the oscillatory structure.}

\label{fig:pe_alpha_ablation}
\end{figure}

\paragraph{Robustness across Acquisition Functions}
Our main experiments use Expected Improvement (EI) for its parameter-free form and wide adoption in BO. Since different acquisition functions induce different exploration--exploitation behavior, we additionally test the robustness of ALAS by running the benchmark set with two alternative acquisitions:
\begin{itemize}
    \item \textbf{Probability of Improvement (PI):} more exploitative than EI, favoring regions likely to improve the current best.
    \item \textbf{Upper Confidence Bound (UCB):} an exploration-driven alternative with an explicit parameter. Following our implementation in Appendix~\ref{app:impl_details}, we fix $\beta=0.2$ across tasks for consistency.
\end{itemize}

Overall, PI and UCB lead to trends consistent with EI across benchmarks in Fig.~\ref{fig:main_results}. This suggests that the gains of ALAS and ALAS-Sep mainly come from improved surrogate modeling,
rather than a specific acquisition choice.

\begin{figure}[htbp]
\centering
\includegraphics[width=1\linewidth]{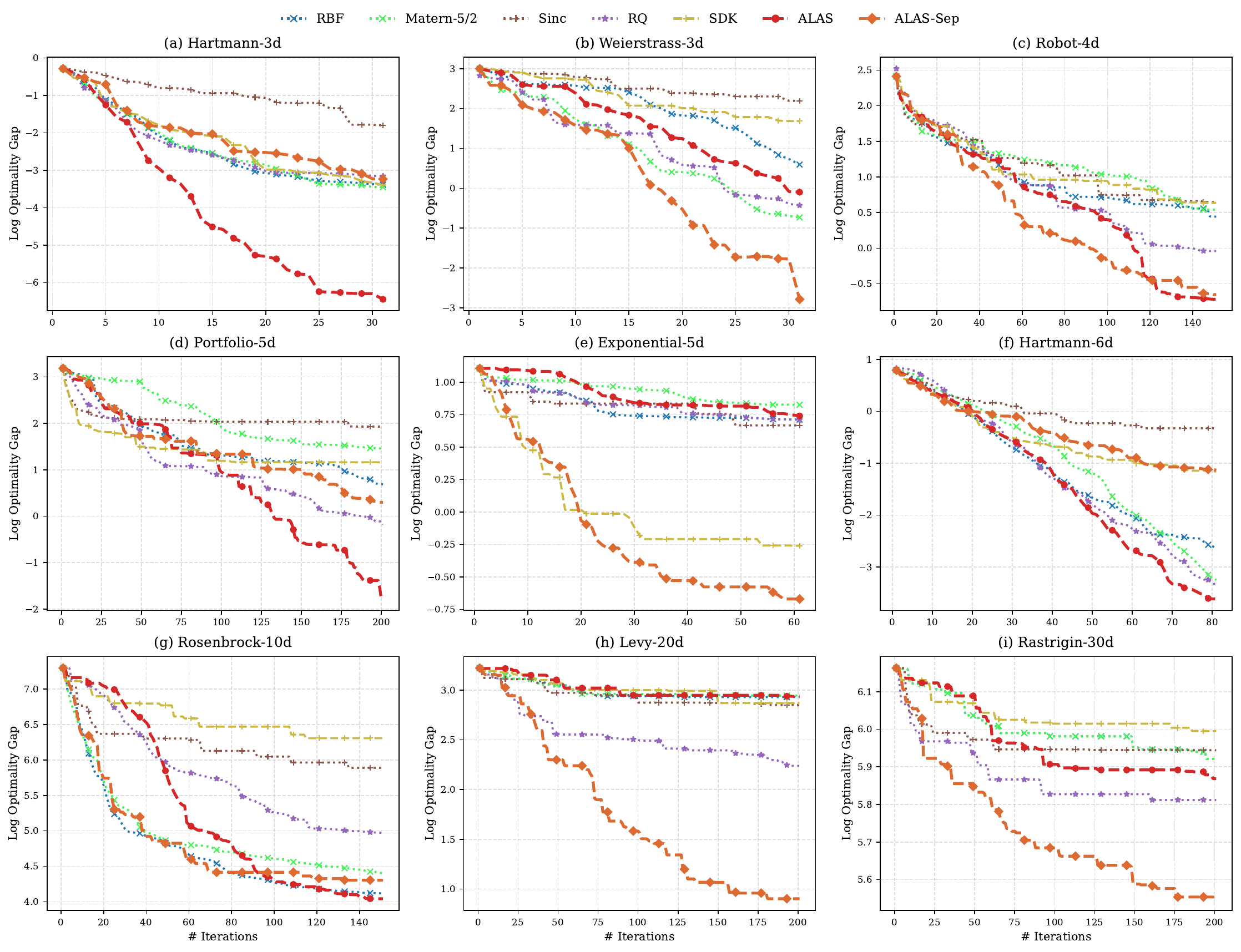}

\caption{\textbf{Optimization performance on benchmarks (PI).}
Best-so-far log optimality gap over BO iterations (mean over 10 seeds) using PI.}

\label{fig:pi_results}
\end{figure}

\begin{figure}[htbp]
\centering
\includegraphics[width=1\linewidth]{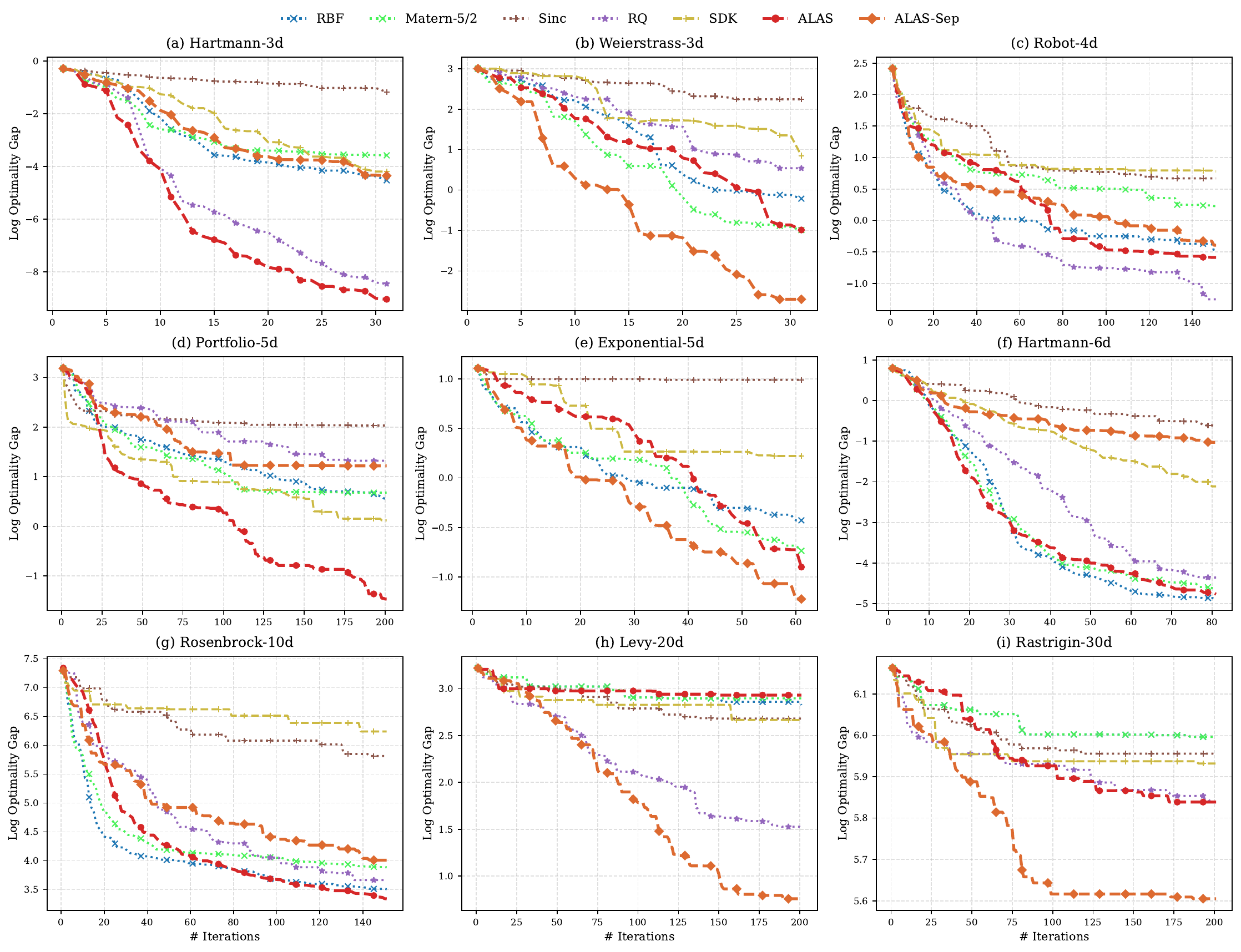}

\caption{\textbf{Optimization performance on benchmarks (UCB, $\beta=0.2$).}
Best-so-far log optimality gap over BO iterations (mean over 10 seeds) using UCB.}

\label{fig:ucb_results}
\end{figure}

\end{document}